\newtheorem{theorem}{Theorem}[section] 
\newtheorem{proposition}[theorem]{Proposition}
\newtheorem{corollary}[theorem]{Corollary}
\newtheorem{definition}[theorem]{Definition}
\newtheorem{remark}[theorem]{Remark}
\newtheorem{assumption}[theorem]{Assumption} 
\newcommand{\RR}{\mathbb{R}}
\newcommand{\NN}{\mathbb{N}}
\newcommand{\cL}{\mathcal{L}}
\newcommand{\cD}{\mathcal{D}}
\newcommand{\btheta}{\boldsymbol{\theta}}
\newcommand{\bx}{\boldsymbol{x}}
\newcommand{\by}{\boldsymbol{y}}
\newcommand{\bb}{\boldsymbol{b}}
\newcommand{\bg}{\boldsymbol{g}}
\newcommand{\bh}{\boldsymbol{h}}
\newcommand{\bv}{\boldsymbol{v}}
\newcommand{\bzero}{\boldsymbol{0}}
\newcommand{\conv}{\operatorname{conv}}
\DeclareMathOperator{\vecop}{vec} 
\newcommand{\deriv}[2]{\frac{d #1}{d #2}} 
\newcommand{\norm}[1]{\left\lVert#1\right\rVert} 
\newcommand{\dcirc}{\partial^{\circ}} 
\title{Embedding principle of homogeneous neural network for classification problem}
\author{%
  Jiahan Zhang$^1$, Yaoyu Zhang$^{1,2}$\thanks{Corresponding author: zhyy.sjtu@sjtu.edu.cn}
  , Tao Luo$^{1,2,3}$\thanks{Corresponding author: luotao41@sjtu.edu.cn} \\ 
  \\[1ex] 
  $^1$School of Mathematical Sciences, Shanghai Jiao Tong University, \\
  $^2$Institute of Natural Sciences, MOE-LSC, Shanghai Jiao Tong University, Shanghai, 200240, China \\
  $^3$CMA-Shanghai, Shanghai Jiao Tong University, Shanghai, 200240, China 
}
\begin{document}

\maketitle

\begin{abstract}
In this paper, we study the Karush-Kuhn-Tucker (KKT) points of the associated maximum-margin problem in homogeneous neural networks, including fully-connected and convolutional neural networks. In particular, We investigates the relationship between such KKT points across networks of different widths generated. We introduce and formalize the \textbf{KKT point embedding principle}, establishing that KKT points of a homogeneous network's max-margin problem ($P_{\Phi}$) can be embedded into the KKT points of a larger network's problem ($P_{\tilde{\Phi}}$) via specific linear isometric transformations. We rigorously prove this principle holds for neuron splitting in fully-connected networks and channel splitting in convolutional neural networks. Furthermore, we connect this static embedding to the dynamics of gradient flow training with smooth losses. We demonstrate that trajectories initiated from appropriately mapped points remain mapped throughout training and that the resulting $\omega$-limit sets of directions are correspondingly mapped, thereby preserving the alignment with KKT directions dynamically when directional convergence occurs. We conduct several experiments to justify that trajectories are preserved. Our findings offer insights into the effects of network width, parameter redundancy, and the structural connections between solutions found via optimization in homogeneous networks of varying sizes.
\end{abstract}

\section{Introduction} 

The optimization of neural networks remains a central challenge, with significant research dedicated to understanding the training dynamics and the properties of converged solutions. For homogeneous networks, such as those using ReLU-like activations without biases, a particularly fruitful line of inquiry connects optimization algorithms like gradient descent to the concept of margin maximization in classification tasks \citep{lyu2019gradient, ji2020directional}. 
In these settings, the Karush-Kuhn-Tucker (KKT) conditions associated with a minimum-norm maximum-margin problem provide a theoretical characterization of optimal parameter configurations \citep{gunasekar2018characterizing,nacson2019convergence}. 

While much work focuses on the implicit bias and convergence properties within a single network architecture, a fundamental question arises regarding the relationship between solutions found in networks of different sizes. Specifically, how do the optimal solutions (characterized by KKT points) of a smaller homogeneous network relate to those of a larger network, particularly one derived by increasing width through operations like neuron splitting? Understanding this relationship is crucial for insights into the effects of overparameterization, the structure of the solution space, and the mechanisms underlying implicit regularization.

To address this question, this paper introduces the \textbf{KKT Point Embedding Principle}. The core idea is that under specific, structure-preserving transformations corresponding to neuron splitting, the KKT points of the max-margin problem associated with a smaller homogeneous network ($\Phi$) can be precisely mapped, via a linear isometry $T$, to KKT points of the analogous problem for a larger network ($\tilde{\Phi}$). This principle provides a concrete link between the optimization landscapes and solution sets of related networks.

Specifically, our contributions include:
\begin{enumerate}
    \item We formalize the KKT Point Embedding Principle (Theorem~\ref{thm:general_kkt_map}) and establishing conditions under which a linear transformation preserves KKT points between the max-margin problems $P_\Phi$ and $P_{\tilde{\Phi}}$(Theorem~\ref{thm:equivalence_kkt_embedding}).
    \item We provide explicit constructions and rigorous proofs demonstrating that this principle holds for neuron splitting  in    full-connected networks (Theorem~\ref{thm:deep_split_properties_kkt_embedding}) and channel splitting in convolutional neural networks(Theorem~\ref{thm:cnn_kkt_embedding}), showing these transformations satisfy the required conditions and are isometric.
    \item  We connect our static KKT embedding to the dynamics of training by proving a strong trajectory preservation principle. Specifically, we show that the entire gradient flow path of the wider network is a linear mapping of its narrower counterpart ($\boldsymbol{\eta}(t) = T\btheta(t)$) (Theorem~\ref{thm:gf_trajectory_map_split}). This dynamic correspondence directly implies that the asymptotic behavior is also preserved, ensuring the set of directional limit points is mapped accordingly ($T(L(\btheta(0))) = L(\boldsymbol{\eta}(0))$) and preserving the convergence towards KKT-aligned solutions (Theorem~\ref{thm:omega_limit_set_map}, Corollary~\ref{cor:unique_limit_direction_embedding}).
\end{enumerate}

The remainder of this paper is organized as follows. Section \ref{sec:related_work} discusses related work on implicit bias, network embeddings, and KKT conditions. Section \ref{sec:preliminaries} introduces necessary notations and definitions. Section \ref{sec:static_analysis} develops the static KKT Point Embedding Principle and applies it to neuron splitting and channl. Section \ref{sec:dynamic_analysis} connects the static principle to gradient flow dynamics. 
Finally, Section \ref{sec:conclusion} concludes the paper. An overview of our theoretical contributions and their relationships is provided in Figure~\ref{fig:overview}.

\begin{figure}[h!]
\centering
\caption{
    \textbf{Overview of Theoretical Contributions.} 
    This figure illustrates the logical flow of our paper's theoretical framework. The analysis begins with the static principles (top section), which are then shown to apply to specific network architectures. These static results form the basis for the dynamic analysis (bottom row), which shows the preservation of training trajectories and their limit sets.
}
\label{fig:overview}
\begin{tikzpicture}[
    node distance=1.0cm and 1.5cm, 
    concept/.style={rectangle, draw, rounded corners, text width=3.2cm, minimum height=0.8cm, text centered, fill=white, font=\footnotesize},
    theorem/.style={concept, fill=blue!15},
    dynamic/.style={concept, fill=green!15, text width=3cm}, 
    connection/.style={->, thick},
    label/.style={text width=4cm, font=\footnotesize\itshape, align=center},
    arrowlabel/.style={midway, font=\scriptsize, inner sep=1.5pt} 
]

\node[theorem] (kkt_embedding) {KKT Point Embedding Principle (Thm.~\ref{thm:general_kkt_map})};
\node[theorem, below=of kkt_embedding] (equivalence) {Equivalence\\ Conditions (Thm.~\ref{thm:equivalence_kkt_embedding})};
\node[theorem, left=of equivalence] (fc_split) {Full-connected \\ Network neuron \\ Splitting (Thm.~\ref{thm:deep_split_properties_kkt_embedding})};
\node[theorem, right=of equivalence] (cnn_split) {CNN Channel \\ Splitting (Thm.~\ref{thm:cnn_kkt_embedding})};

\node[dynamic, below=2.2cm of equivalence] (omega_limit) {$\omega$-limit Set\\ Mapping (Thm.~\ref{thm:omega_limit_set_map})};
\node[dynamic, left=of omega_limit] (trajectory) {Trajectory \\Preservation (Thm.~\ref{thm:gf_trajectory_map_split})};
\node[dynamic, right=of omega_limit] (corollary) {Limit Direction \\Embedding (Cor.~\ref{cor:unique_limit_direction_embedding})};

\draw[connection] (kkt_embedding) -- node[arrowlabel, right] {Characterizes} (equivalence);

\draw[connection] (equivalence) -- node[arrowlabel, above, sloped] {Special case} (fc_split);
\draw[connection] (equivalence) -- node[arrowlabel, above, sloped] {Special case} (cnn_split);

\draw[connection] (fc_split.south) -- (fc_split.south |- trajectory.north west) node[arrowlabel, midway, left] {Preserves}; 
\draw[connection] (cnn_split.south) -- (trajectory.north east) node[arrowlabel, midway, below, sloped] {Preserves}; 

\draw[connection] (trajectory) -- node[arrowlabel, below] {Implies} (omega_limit);

\draw[connection] (omega_limit) -- node[arrowlabel, below] {Special case} (corollary);

\node[label, above=0.2cm of kkt_embedding] {\textit{Static Analysis: KKT points of smaller network embed into larger network}};
\node[label, below=0.4cm of omega_limit] {\textit{Dynamic Analysis: Training trajectories preserve embedding structure}};
\node[draw, dashed, inner sep=0.25cm, fit=(kkt_embedding) (fc_split) (cnn_split)] {};
\node[draw, dashed, inner sep=0.25cm, fit=(trajectory) (omega_limit) (corollary)] {};

\end{tikzpicture}
\end{figure}

\section{Related work}
\label{sec:related_work}

Our work builds upon and contributes to several lines of research concerning neural network optimization, implicit bias, network structure, and optimality conditions.

\paragraph{Optimization dynamics and implicit bias} 
A significant body of research investigates the implicit bias of gradient-based optimization in neural networks. Seminal work by \citet{soudry2018implicit} showed that for linear logistic regression on separable data, gradient descent (GD) converges in direction to the $L^2$-max-margin solution. This finding was extended to various settings, including stochastic GD \citep{nacson2019stochastic}, and generalized to deep linear networks \citep{ji2018gradient}. For non-linear homogeneous neural networks (e.g., ReLU networks without biases), studies confirm that GD/GF with exponential-tailed losses also typically steer parameters towards margin maximization \citep{lyu2019gradient, ji2020directional, nacson2019lexicographic, wei2019regularization}. The resulting solutions are frequently characterized as KKT points of an associated max-margin problem \citep{gunasekar2018characterizing, nacson2019stochastic}.

More recently, this line of inquiry has been extended to analyze the margin in non-homogeneous models \citep{kunin2023asymmetric, cai2023large} and to connect KKT conditions with the phenomenon of benign overfitting \citep{frei2023benign}.
These analyses, however, primarily focus on the dynamics within a single, fixed-width network, with less exploration of how such KKT-characterized solutions relate across networks of varying widths, such as those generated by neuron splitting.

\paragraph{Network embedding principles}
The idea that smaller network functionalities can be embedded within larger ones has been explored, notably by \citet{zhang2021embedding2} who proposed an ``Embedding Principle.''
This concept is related to foundational ideas on hierarchical structures and singularities in the parameter space \citep{fukumizu2000local}, as well as more recent work exploring symmetries, such as permutation invariance, that connect different global minima \citep{simsek2021geometry}. 
While general embedding principles focus on preserving the loss landscape, their specific application to the mapping of KKT points within max-margin settings, or to the detailed embedding of entire optimization trajectories, especially where parameter norms diverge, remains a less developed area warranting further investigation.

\paragraph{KKT conditions in optimization and machine learning}
The KKT conditions \citep{kuhn1951nonlinear} are fundamental for constrained optimization, extending to non-smooth problems via tools like the Clarke subdifferential \citep{clarke1975generalized} (Definition~\ref{def:kkt}), and famously characterizing SVM solutions \citep{cortes1995support}. In deep learning, KKT conditions are crucial for theoretically characterizing network solutions. As highlighted, they are pivotal in understanding the implicit bias of GD towards max-margin solutions in homogeneous networks \citep{gunasekar2018characterizing, nacson2019convergence}. These studies analyze KKT points of a specific minimum-norm, maximum-margin problem (like $P_\Phi$ in Definition~\ref{def:P_phi}) for a given network. While this establishes the nature of solutions within a fixed architecture, the structural relationship and potential embedding of these KKT points when transitioning between networks of different widths remains a complementary and important area of study.

\section{Preliminaries} 
\label{sec:preliminaries}

\paragraph{Basic notations}
For $ n \in \NN $, let $ [n] \coloneqq \{1, \ldots, n\} $. The Euclidean ($L^2$) norm of a vector $ \bv $ is denoted by $ \norm{\bv}_2 $. For a function $f: \RR^d \to \RR$, $\nabla f(\bx)$ is its gradient if $f$ is differentiable, and $\dcirc f(\bx)$ denotes its Clarke subdifferential \citep{clarke1975generalized} if $f$ is locally Lipschitz (Definition ~\ref{def:clarke_subdifferential}). A function $f: X \to \RR^d$ is $ \mathcal{C}^k $-smooth if it is $k$-times continuously differentiable, and $f: X \to \RR$ is locally Lipschitz if it is Lipschitz continuous on a neighborhood of every point in $X$. Vectors are written in bold (e.g., $\bx, \btheta$).
Furthermore, for a linear map $ T: \RR^{m} \to \RR^{\tilde{m}} $ and a set $ S \subseteq \RR^{m} $, we denote the image of $S$ under $T$ as $ TS \coloneqq \{ T\bx \mid \bx \in S \} $. Consequently, if $S = \dcirc f(\bx)$, its image is $T \dcirc f(\bx) \coloneqq \{ T\bg \mid \bg \in \dcirc f(\bx) \}$.

\begin{definition}[Homogeneous neural network]\label{def:homogeneous_nn} 
A neural network $ \Phi(\btheta ; \bx)$ with parameters $ \btheta $ is (positively) homogeneous of order $L>0$ if:
for all $c>0$: $\Phi(c \btheta ; \bx)=c^{L} \Phi(\btheta ; \bx) $. 
Common examples include networks composed of linear layers and positive 1-homogeneous activations like ReLU ($\sigma(z)=\max(0,z)$), potentially excluding bias terms.
\end{definition}

\paragraph{Binary classification setup}
We consider a dataset $\cD = \{ (\bx_k, y_k) \}_{k=1}^n$, where $\bx_k \in \RR^{d_x}$ are input features and $y_k \in \{ \pm 1 \}$ are binary labels. The network $\Phi(\btheta; \bx)$ maps inputs to a scalar output, with parameters $\btheta \in \RR^m$. 

\begin{definition}[Margin]\label{def:margin} 
The margin of the network $\Phi$ with parameters $\btheta$ on data point $k$ is $ q_{k}(\btheta) \coloneqq y_{k} \Phi(\btheta ; \bx_{k}) $. The margin on the dataset is $ q_{\min }(\btheta) \coloneqq \min_{k \in[n]} q_{k}(\btheta) $. For an $L$-homogeneous network $\Phi$, the normalized margin is defined as $\bar{\gamma}(\btheta) \coloneqq q_{\min }(\btheta) / \norm{\btheta}_2^L$. 
\end{definition}

\begin{definition}[KKT conditions for non-smooth problems]\label{def:kkt} 
Consider the optimization problem $\min_{\bx \in \RR^{d_x}} f(\bx)$ subject to $g_{k}(\bx) \leq 0$ for all $k \in[n]$, where $f$ and $g_k$ are locally Lipschitz functions. A feasible point $ \bx^*$ is a Karush-Kuhn-Tucker (KKT) point if there exist multipliers $\lambda_k \ge 0$, $k\in[n]$, such that:
\begin{enumerate}
    \item Stationarity: $ \bzero \in \dcirc f(\bx^*) + \sum_{k=1}^{n} \lambda_{k} \dcirc g_{k}(\bx^*) $.
    \item Primal Feasibility: $ g_{k}(\bx^*) \leq 0 $, for all $k \in[n]$. 
    \item Dual Feasibility: $ \lambda_{k} \geq 0 $, for all $k \in [n]$. 
    \item Complementary Slackness: $ \lambda_{k} g_{k}(\bx^*) = 0 $, for all $k \in [n]$. 
\end{enumerate}
\end{definition}

\begin{definition}[Minimum-norm max-margin problem $P_{\Phi}$]\label{def:P_phi} 
Inspired by the implicit bias literature, we consider the problem of finding minimum-norm parameters that achieve a margin of at least 1:
\[ P_{\Phi}: \quad f(\btheta)=\min_{\btheta \in \RR^m} \frac{1}{2}\norm{\btheta}_{2}^{2} \quad \text { subject to } \quad g_{k}(\btheta) \coloneqq 1 - y_k \Phi(\btheta ; \bx_k) \leq 0, \quad \text{for all } k \in[n]. \] 
Here, the objective is $f(\btheta) = \frac{1}{2}\norm{\btheta}_2^2$ (so $\dcirc f(\btheta) = \{\btheta\}$) and the constraints involve $g_k(\btheta)$. Under Assumption \ref{def:chain_rule_assumption}, $\dcirc g_k(\btheta) = -y_k \dcirc_{\btheta} \Phi(\btheta; \bx_k)$. We denote the analogous problem for a different network $\tilde{\Phi}$ as $P_{\tilde{\Phi}}$. KKT points of this problem characterize directions associated with margin maximization.
\end{definition}

\paragraph{Gradient flow}
We model the training dynamics using gradient flow (GF), which can be seen as gradient descent with infinitesimal step size. The parameter trajectory $\btheta(t)$ is an arc satisfying the differential inclusion $\deriv{\btheta(t)}{t} \in -\dcirc \cL(\btheta(t))$ for almost every $t \ge 0$. Here $\cL(\btheta) \coloneqq \sum_{k=1}^{n} \ell( y_k \Phi( \btheta ; \bx_k ) )$ is the training loss, where $\ell$ is a suitable loss function (e.g., exponential loss). We denote the analogous loss for network $\tilde{\Phi}$ as $\tilde{\cL}$. 

\section{The KKT point embedding principle: Static setting} 
\label{sec:static_analysis}

We first establish a general principle for mapping KKT points between constrained optimization problems linked by a linear transformation. We then specialize this principle to the context of homogeneous neural network classification via neuron splitting. Throughout this section, we operate under the following assumptions regarding the networks involved.

\begin{assumption}[Static setting]\label{as:static}
Let $\Phi(\btheta; \bx)$ (parameters $\btheta \in \RR^m$) and $\tilde{\Phi}(\boldsymbol{\eta}; \bx)$ (parameters $\boldsymbol{\eta} \in \RR^{\tilde{m}}$) be two neural networks. We assume:\\ 
(A1) (Regularity) For any fixed input $\bx$, the functions $\Phi(\cdot ; \bx)$ and $\tilde{\Phi}(\cdot ; \bx)$ mapping parameters to output are locally Lipschitz. Furthermore, they admit the application of subdifferential chain rules as needed (Assumption \ref{def:chain_rule_assumption}).\\
(A2) (Homogeneity) Both $\Phi$ and $\tilde{\Phi}$ are positively homogeneous of the same order $L>0$ with respect to their parameters (Definition \ref{def:homogeneous_nn}).
\end{assumption}

\subsection{The KKT point embedding principle: a general framework} 
\begin{theorem}[General KKT mapping via linear transformation]\label{thm:general_kkt_map}
Let $f, g_k : \RR^m \to \RR$ and $\tilde{f}, \tilde{g}_{k} : \RR^{\tilde{m}} \to \RR$ be locally Lipschitz. Consider problems: 
\[ (P)\quad \min f(\btheta) \quad \text{s.t.} \quad g_k(\btheta) \leq 0, \text{ for all } k \in [n]. \] 
\[ (\tilde{P})\quad \min \tilde{f}(\boldsymbol{\eta}) \quad \text{s.t.} \quad \tilde{g}_{k}(\boldsymbol{\eta}) \leq 0, \text{ for all } k \in [n]. \] 
Let $ T: \RR^{m} \to \RR^{\tilde{m}} $ be linear. Suppose: 
\begin{enumerate} 
    \item Constraint Preserving: $\tilde{g}_{k}(T \btheta)=g_{k}(\btheta)$, for all $\btheta \in \RR^m$ and  $k \in [n]$. 
    \item Objective Subgradient Preserving: $\dcirc \tilde{f}(T \btheta) = T \dcirc f(\btheta)$, for all $\btheta \in \RR^m$. 
    \item Constraint Subgradient Preserving:  $\exists t_k > 0$ s.t. $\dcirc \tilde{g}_{k}(T \btheta) = t_k T \dcirc g_k(\btheta)$, for all $\btheta \in \RR^m$ and $k\in [n].$ 
\end{enumerate}
If $\btheta^{*}$ is a KKT point of ($P$), then $\boldsymbol{\eta}^{*} = T \btheta^{*}$ is a KKT point of ($\tilde{P}$).
\end{theorem}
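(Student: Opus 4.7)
The plan is to verify each of the four KKT conditions at $\boldsymbol{\eta}^* = T\btheta^*$ using the hypothesized multipliers $\tilde{\lambda}_k \coloneqq \lambda_k/t_k$, where $\{\lambda_k\}_{k\in[n]}$ are the KKT multipliers certifying that $\btheta^*$ is a KKT point of $(P)$. Primal feasibility is immediate from condition 1: $\tilde{g}_k(\boldsymbol{\eta}^*) = \tilde{g}_k(T\btheta^*) = g_k(\btheta^*) \leq 0$. Dual feasibility $\tilde{\lambda}_k \geq 0$ follows because $\lambda_k \geq 0$ and $t_k > 0$. Complementary slackness follows by combining the two: $\tilde{\lambda}_k \tilde{g}_k(\boldsymbol{\eta}^*) = (\lambda_k/t_k)\, g_k(\btheta^*) = 0$ since $\lambda_k g_k(\btheta^*) = 0$.

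The only nontrivial step is stationarity. The inclusion $\bzero \in \dcirc f(\btheta^*) + \sum_{k=1}^{n} \lambda_k \dcirc g_k(\btheta^*)$ is a set-valued statement, so I would begin by selecting explicit subgradients: there exist $\bv \in \dcirc f(\btheta^*)$ and $\bw_k \in \dcirc g_k(\btheta^*)$ with
\[ \bv + \sum_{k=1}^{n} \lambda_k \bw_k = \bzero. \]
Applying the linear map $T$ termwise and using linearity, this becomes
\[ T\bv + \sum_{k=1}^{n} \lambda_k\, T\bw_k \;=\; T\bv + \sum_{k=1}^{n} \frac{\lambda_k}{t_k}\,(t_k T\bw_k) \;=\; \bzero. \]
By condition 2, $T\bv \in T\dcirc f(\btheta^*) = \dcirc \tilde{f}(\boldsymbol{\eta}^*)$, and by condition 3, $t_k T\bw_k \in t_k T\dcirc g_k(\btheta^*) = \dcirc \tilde{g}_k(\boldsymbol{\eta}^*)$. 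Hence the displayed identity exhibits $\bzero$ as an element of $\dcirc \tilde{f}(\boldsymbol{\eta}^*) + \sum_k \tilde{\lambda}_k \dcirc \tilde{g}_k(\boldsymbol{\eta}^*)$, which is exactly the stationarity condition for $\boldsymbol{\eta}^*$ with multipliers $\tilde{\lambda}_k$.

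There is no genuine obstacle here; the proof is essentially a bookkeeping argument. The one point deserving care is that the set equalities in conditions 2 and 3 are precisely what lets one transport an individual stationarity witness through $T$ without having to argue via some broader selection theorem — linearity of $T$ plus the exact (rather than inclusion) identities on subdifferentials makes the computation a one-line manipulation. The positive scaling factor $t_k$ in condition 3 is the reason the multipliers must be renormalized as $\tilde{\lambda}_k = \lambda_k/t_k$, and positivity of $t_k$ is what keeps dual feasibility intact; if condition 3 only gave an inclusion or allowed $t_k \leq 0$, the argument would break, so I would flag these as the structural assumptions doing the real work.
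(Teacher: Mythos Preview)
Your proposal is correct and matches the paper's proof essentially line for line: both select explicit subgradient witnesses for the stationarity inclusion at $\btheta^*$, push them through $T$ by linearity, invoke conditions 2 and 3 to land in the right subdifferentials, and renormalize the multipliers to $\lambda_k/t_k$; primal feasibility, dual feasibility, and complementary slackness are handled identically. The only cosmetic difference is ordering (the paper does stationarity before complementary slackness and splits the latter into a two-case argument rather than the single line $(\lambda_k/t_k)g_k(\btheta^*)=0$).
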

\begin{proof}
Proof deferred to Appendix~\ref{app:proof_general_kkt_map}.
\end{proof}

We now specialize Theorem \ref{thm:general_kkt_map} to the minimum-norm max-margin problem $P_{\Phi}$ (Definition \ref{def:P_phi}).

\begin{definition}[KKT point preserving transformation]
\label{def:kkt_point_preserving_transform}
Let $\Phi(\btheta; \bx)$ (with parameters $\btheta \in \mathbb{R}^m$) and $\tilde{\Phi}(\boldsymbol{\eta}; \bx)$ (with parameters $\boldsymbol{\eta} \in \mathbb{R}^{l}$) be two neural networks, and let $P_{\Phi}$ and $P_{\tilde{\Phi}}$ be their associated minimum-norm max-margin problems (as defined in Definition~\ref{def:P_phi}). A linear transformation $T: \mathbb{R}^m \to \mathbb{R}^{l}$ is called \textbf{KKT point preserving} from $P_{\Phi}$ to $P_{\tilde{\Phi}}$ if:\\ 
For any dataset $\cD = \{ (\bx_k, y_k) \}_{k=1}^N$, if  $\btheta^{*}$ is a KKT point of $P_{\Phi}$, then $\boldsymbol{\eta}^{*} = T(\btheta^{*})$ is a KKT point of $P_{\tilde{\Phi}}$.
\end{definition}

\begin{proposition}[Composition of KKT point preserving transformations]
\label{prop:composition_kkt_preserving}
Let $\Phi(\btheta; \bx)$ (parameters $\btheta \in \mathbb{R}^m$), $\Phi_1(\boldsymbol{\eta}_1; \bx)$ (parameters $\boldsymbol{\eta}_1 \in \mathbb{R}^{m_1}$), and $\Phi_2(\boldsymbol{\eta}_2; \bx)$ (parameters $\boldsymbol{\eta}_2 \in \mathbb{R}^{m_2}$) be three neural networks, with inputs $\bx \in \mathbb{R}^{d_x}$. Let $P_{\Phi}$, $P_{\Phi_1}$, and $P_{\Phi_2}$ be their respective associated minimum-norm max-margin problems (Definition~\ref{def:P_phi}).

Suppose $T_1: \mathbb{R}^m \to \mathbb{R}^{m_1}$ is a linear transformation that is KKT Point Preserving from $P_{\Phi}$ to $P_{\Phi_1}$ (as per Definition~\ref{def:kkt_point_preserving_transform}).
Suppose $T_2: \mathbb{R}^{m_1} \to \mathbb{R}^{m_2}$ is a linear transformation that is KKT Point Preserving from $P_{\Phi_1}$ to $P_{\Phi_2}$ (as per Definition~\ref{def:kkt_point_preserving_transform}).

Then, the composite linear transformation $T = T_2 \circ T_1: \mathbb{R}^m \to \mathbb{R}^{m_2}$ is KKT Point Preserving from $P_{\Phi}$ to $P_{\Phi_2}$.
Furthermore, if $T_1$ and $T_2$ are isometries, then $T = T_2 \circ T_1$ is also an isometry.
\end{proposition}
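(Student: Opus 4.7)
The plan is to verify both claims essentially by unpacking Definition~\ref{def:kkt_point_preserving_transform} and using the fact that linearity and isometry are preserved under composition. I do not anticipate a genuine obstacle here; the proposition is a bookkeeping statement whose purpose is to justify later constructions (for example, layer-by-layer neuron splitting) as iterated applications of the single-step result. The only minor point to be careful about is that the KKT preservation property is required to hold \emph{for any dataset}, so the same dataset $\cD$ can be used throughout the chain.

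First I would note that $T = T_2 \circ T_1$ is linear, being the composition of two linear maps, so it is a legitimate candidate for Definition~\ref{def:kkt_point_preserving_transform}. Next, to prove KKT preservation, I would fix an arbitrary dataset $\cD = \{(\bx_k, y_k)\}_{k=1}^n$ and let $\btheta^{*} \in \RR^m$ be an arbitrary KKT point of $P_{\Phi}$ on $\cD$. Applying the hypothesis on $T_1$ to $\cD$ and $\btheta^{*}$ yields that $\boldsymbol{\eta}_1^{*} \coloneqq T_1 \btheta^{*}$ is a KKT point of $P_{\Phi_1}$ on $\cD$. Applying the hypothesis on $T_2$ to $\cD$ and $\boldsymbol{\eta}_1^{*}$ then yields that $T_2 \boldsymbol{\eta}_1^{*} = T_2(T_1 \btheta^{*}) = (T_2 \circ T_1)\btheta^{*} = T \btheta^{*}$ is a KKT point of $P_{\Phi_2}$ on $\cD$. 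Since $\cD$ and $\btheta^{*}$ were arbitrary, this is precisely the statement that $T$ is KKT point preserving from $P_{\Phi}$ to $P_{\Phi_2}$.

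For the isometry claim, assuming $T_1$ and $T_2$ each preserve the Euclidean norm, I would simply compute for arbitrary $\btheta \in \RR^m$:
\[
    \norm{T \btheta}_2 = \norm{T_2(T_1 \btheta)}_2 = \norm{T_1 \btheta}_2 = \norm{\btheta}_2,
\]
using the isometry of $T_2$ in the second equality and the isometry of $T_1$ in the third. Hence $T$ is an isometry, completing the argument.
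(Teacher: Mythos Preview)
Your proposal is correct and follows essentially the same approach as the paper's own proof: both argue by fixing an arbitrary dataset and KKT point, then chaining the two KKT-preservation hypotheses, and both verify the isometry claim via the identical chain $\norm{T\btheta}_2 = \norm{T_2(T_1\btheta)}_2 = \norm{T_1\btheta}_2 = \norm{\btheta}_2$. Your explicit remark that $T$ is linear as a composition of linear maps is a small clarification the paper omits, but otherwise the arguments are the same.
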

\begin{proof}
Proof deferred to Appendix~\ref{app:proof_composition_kkt_preserving}.
\end{proof}

\begin{theorem}[Equivalence conditions for KKT embedding]
\label{thm:equivalence_kkt_embedding}
Let $\Phi(\btheta; \bx)$ (parameters $\btheta \in \mathbb{R}^m$) and $\tilde{\Phi}(\boldsymbol{\eta}; \bx)$ (parameters $\boldsymbol{\eta} \in \mathbb{R}^{l}$) satisfy Assumptions~\ref{as:static} (A1, A2). Let $P_{\Phi}$ and $P_{\tilde{\Phi}}$ be their associated min-norm max-margin problems (Definition~\ref{def:P_phi}). Let $T: \mathbb{R}^m \to \mathbb{R}^{l}$ be a linear transformation. The following conditions are equivalent: 
\begin{enumerate}
    \item Output and subgradient preserving: for all $\btheta \in \mathbb{R}^m$ and  $\bx \in \mathbb{R}^{d_x}$, 
    \begin{align*}
    \tilde{\Phi}(T(\btheta); \bx) &= \Phi(\btheta; \bx), \\
    \exists \tau(\btheta, \bx) > 0 \text{ s.t. } \dcirc_{\boldsymbol{\eta}} \tilde{\Phi}(T(\btheta); \bx) &= \tau(\btheta, \bx) T(\dcirc_{\btheta} \Phi(\btheta; \bx)).
    \end{align*}
    \item The transformation $T$ is KKT Point Preserving from $P_{\Phi}$ to $P_{\tilde{\Phi}}$ (as per Definition~\ref{def:kkt_point_preserving_transform}).
\end{enumerate}
\end{theorem}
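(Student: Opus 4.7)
The statement is an equivalence, so I treat the two implications separately. The forward direction is a direct specialization of Theorem~\ref{thm:general_kkt_map}, while the reverse direction requires constructing, for each target $(\btheta,\bx)$, a dataset whose KKT system probes the values and subgradients of $\tilde{\Phi}\circ T$.

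For (1) $\Rightarrow$ (2), I apply Theorem~\ref{thm:general_kkt_map} to $f(\btheta) = \tfrac{1}{2}\norm{\btheta}_2^2$, $g_k(\btheta) = 1 - y_k\Phi(\btheta;\bx_k)$, and the analogous $\tilde{f}, \tilde{g}_k$ for $\tilde{\Phi}$. The constraint-preserving hypothesis $\tilde{g}_k(T\btheta) = g_k(\btheta)$ follows immediately from $\tilde{\Phi}\circ T = \Phi$. The objective-subgradient-preserving hypothesis collapses to the trivial identity $\dcirc\tilde{f}(T\btheta) = \{T\btheta\} = T\{\btheta\} = T\dcirc f(\btheta)$. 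The constraint-subgradient-preserving hypothesis follows from the Clarke chain rule applied to $-y_k\tilde{\Phi}\circ T$, giving $\dcirc\tilde{g}_k(T\btheta) = -y_k\tau(\btheta,\bx_k) T\dcirc_{\btheta}\Phi(\btheta;\bx_k) = \tau(\btheta,\bx_k) T\dcirc g_k(\btheta)$, so I set $t_k = \tau(\btheta,\bx_k) > 0$. Since the three hypotheses hold for every choice of dataset, Definition~\ref{def:kkt_point_preserving_transform} is satisfied.

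For (2) $\Rightarrow$ (1), I first establish output equality. Fix $(\btheta_0,\bx_0)$; $L$-homogeneity allows me to reduce to $|\Phi(\btheta_0;\bx_0)| \in \{0,1\}$, and the case $\Phi = 0$ follows from the nonzero case by a continuity/perturbation limit. In the generic case $\Phi(\btheta^*;\bx_0) = y_0 \in \{\pm 1\}$ with $\btheta^* \ne 0$, I construct a dataset $\cD = \{(\bx_0,y_0)\} \cup \{(\bx_i,y_i)\}_{i=1}^r$ whose auxiliary points are chosen so that (i) $\btheta^*$ is primal-feasible for $P_\Phi(\cD)$, and (ii) some $\bs_0 \in \dcirc_{\btheta}\Phi(\btheta^*;\bx_0)$ together with $\bs_i \in \dcirc_{\btheta}\Phi(\btheta^*;\bx_i)$ positively span $\btheta^*$; then $\btheta^*$ is KKT of $P_\Phi(\cD)$ with $\lambda_0 > 0$. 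By hypothesis, $T\btheta^*$ is KKT of $P_{\tilde{\Phi}}(\cD)$, so $y_0\tilde{\Phi}(T\btheta^*;\bx_0) \ge 1$. The strict inequality $\tilde{\Phi}(T\btheta^*;\bx_0) > 1$ is ruled out by varying the auxiliary configuration and confronting the two representations $T\btheta^* = T(\sum_k\lambda_k y_k\bs_k) = \sum_k\tilde\lambda_k y_k\tilde\bs_k$ against the Euler identity $\langle T\btheta^*,\tilde\bs\rangle = L\tilde{\Phi}(T\btheta^*;\bx_0)$ for $\tilde\bs \in \dcirc_{\boldsymbol{\eta}}\tilde{\Phi}(T\btheta^*;\bx_0)$. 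This forces $\tilde{\Phi}(T\btheta^*;\bx_0) = 1 = \Phi(\btheta^*;\bx_0)$, and homogeneity then lifts the equality to $(\btheta_0,\bx_0)$.

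With output equality in hand, for arbitrary $(\btheta,\bx)$ I construct a dataset in which a rescaling $\btheta^*$ of $\btheta$ is KKT of $P_\Phi$ with $(\bx,y)$ as the \emph{unique} active constraint, so stationarity reads $\btheta^* = \lambda\bs$ with $\bs \in \dcirc_{\btheta}\Phi(\btheta^*;\bx)$ and $\lambda > 0$. Output equality guarantees the same singleton active set for $P_{\tilde{\Phi}}$ at $T\btheta^*$, yielding $T\btheta^* = \tilde\lambda\tilde\bs$ with $\tilde\bs \in \dcirc_{\boldsymbol{\eta}}\tilde{\Phi}(T\btheta^*;\bx)$; matching the two expressions gives $\tilde\bs = \tau T\bs$ with $\tau = \lambda/\tilde\lambda > 0$. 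Ranging over $\bs$ by auxiliary-data perturbations, combined with the convex-hull structure of Definition~\ref{def:clarke_subdifferential}, produces the set-level identity $\dcirc_{\boldsymbol{\eta}}\tilde{\Phi}(T\btheta^*;\bx) = \tau T\dcirc_{\btheta}\Phi(\btheta^*;\bx)$; the $\bs$-independence of $\tau$ is enforced by the twin Euler identities $\langle\btheta^*,\bs\rangle = L\Phi(\btheta^*;\bx) = \langle T\btheta^*,\tilde\bs\rangle$. Homogeneity lifts to arbitrary $\btheta$. \emph{The main obstacle} throughout the reverse direction is the dataset-construction step itself: showing that for any prescribed $(\btheta^*,\bx_0)$ one can exhibit Clarke subgradients at auxiliary inputs combining positively with $\dcirc_{\btheta}\Phi(\btheta^*;\bx_0)$ to reconstruct $\btheta^*$ while realizing the desired active-set structure (multi-active for Part A, singleton-active for Part B). I expect this to rely on Assumption~\ref{as:static}, Euler's identity for $L$-homogeneous locally Lipschitz functions, and a spanning/perturbation argument to cover nonsmooth and boundary configurations.
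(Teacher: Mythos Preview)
Your forward implication $(1)\Rightarrow(2)$ is identical to the paper's: both specialize Theorem~\ref{thm:general_kkt_map} to $f(\btheta)=\tfrac12\norm{\btheta}_2^2$ and $g_k(\btheta)=1-y_k\Phi(\btheta;\bx_k)$, verify constraint preservation from output equality, note that the objective-subgradient condition is trivial, and set $t_k=\tau(\btheta,\bx_k)$ for the constraint-subgradient condition.

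For $(2)\Rightarrow(1)$, the paper gives only a sketch, and a thin one: it argues output equality from ``active constraint at $\btheta^*$ $\Rightarrow$ active constraint at $T\btheta^*$'' and then simply asserts that ``assuming KKT points are sufficiently distributed, this suggests the general identity $\Phi(\btheta;\bx)=\tilde\Phi(T\btheta;\bx)$''; for subgradients it writes down the two stationarity conditions and says the required structural relation is ``necessitated'' by their holding across all datasets. Your plan is the same in spirit but strictly more detailed: you make explicit the dataset-construction mechanism (auxiliary points whose subgradients positively span $\btheta^*$, singleton-active configurations to isolate one subgradient), you invoke Euler's identity for $L$-homogeneous locally Lipschitz functions to pin down the scaling, and you use homogeneity to normalize $|\Phi|\in\{0,1\}$. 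You also correctly flag the dataset-construction step as the real obstacle --- which the paper does not resolve either, hiding it behind the phrase ``sufficiently distributed''. So your proposal is not a different route; it is a more honest and more worked-out version of the paper's own sketch, sharing the same unclosed gap: neither you nor the paper actually proves that for arbitrary $(\btheta,\bx)$ one can manufacture a dataset realizing the desired KKT/active-set structure at $\btheta$, and without that the reverse implication remains heuristic.
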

\begin{proof}Proof deferred to Appendix~\ref{app:proof_equivalence_kkt_embedding}.
\end{proof}
\subsection{Applications: KKT embedding via neuron and channel splitting}

\subsubsection{Neuron splitting in fully-connected networks} 

The principle extends naturally to splitting neurons in hidden layers of fully-connected homogeneous networks.

\begin{definition}[Neuron splitting transformation in fully-connected networks]\label{def:deep_neuron_splitting}
Let $\Phi$ be an $\alpha$-layer homogeneous network satisfying Assumptions \ref{as:static}(A1, A2), defined by weight matrices $W^{(l)}$ for $l \in [\alpha+1]$ and positive 1-homogeneous activations $\sigma_l$ for hidden layers $l \in [\alpha]$. Let $\btheta = (\vecop(W^{(1)}), \dots, \vecop(W^{(\alpha+1)})) \in \RR^m$ be the parameter vector.

A neuron splitting transformation $T: \btheta \mapsto \boldsymbol{\eta}$ constructs a new network $\tilde{\Phi}$ (with parameters $\boldsymbol{\eta} \in \RR^{\tilde{m}}$) from $\Phi$ by splitting a single neuron $j$ in a hidden layer $k$ ($1 \le k \le \alpha$) into $m_{split}$ new neurons (using $m_{split}$ to avoid clash with parameter dimension $m$). This splitting is defined by coefficients $c_i \ge 0$ for $i \in [m_{split}]$ such that $\sum_{i=1}^{m_{split}} c_i^2 = 1$. The transformation $T$ modifies the weights associated with the split neuron as follows:
\begin{enumerate}
    \item The $j$-th row of $W^{(k)}$ (weights into neuron $j$, denoted $W^{(k)}_{j,:}$) is replaced by $m_{split}$ rows in the corresponding weight matrix $W'^{(k)}$ of $\tilde{\Phi}$. For each $i \in [m_{split}]$, the $i$-th of these new rows is $c_i W^{(k)}_{j,:}$.
    \item The $j$-th column of $W^{(k+1)}$ (weights out of neuron $j$, denoted $W^{(k+1)}_{:,j}$) is replaced by $m_{split}$ columns in the corresponding weight matrix $W'^{(k+1)}$ of $\tilde{\Phi}$. For each $i \in [m_{split}]$, the $i$-th of these new columns is $c_i W^{(k+1)}_{:,j}$.
    \item All other weights, rows, and columns in all weight matrices remain unchanged when mapped by $T$.
\end{enumerate}
The resulting parameter vector for $\tilde{\Phi}$ is $\boldsymbol{\eta} = T\btheta$.
\end{definition}

\begin{remark}[Specialization to a two-layer network]
\label{rem:deep_split_specialization_to_two_layer} 
As a concrete example, the neuron splitting transformation in Definition~\ref{def:deep_neuron_splitting}, when applied to an $\alpha=2$ layer network (i.e., a single hidden layer, $k=1$), specializes to the two-layer neuron splitting described in Theorem~\ref{thm:two_layer_split}. Specifically, splitting hidden neuron $j$ involves transforming its input weights $W^{(1)}_{j,:}$ (analogous to $\bb^{\top}$) to $m_{split}$ sets $c_i W^{(1)}_{j,:}$, and its output weights $W^{(2)}_{:,j}$ (analogous to $a$) to $m_{split}$ sets $c_i W^{(2)}_{:,j}$. This results in effective parameters $(c_i a, c_i \bb)$ for each $i$-th split part of the neuron, consistent with Theorem~\ref{thm:two_layer_split}.
\end{remark}

\begin{theorem}[Properties of deep neuron splitting transformation and KKT embedding]\label{thm:deep_split_properties_kkt_embedding}
Let $T$ be the deep neuron splitting transformation defined in Definition~\ref{def:deep_neuron_splitting}. Let $P_{\Phi}$ and $P_{\tilde{\Phi}}$ be the min-norm max-margin problems (Definition~\ref{def:P_phi}) associated with the original network $\Phi$ (with parameters $\btheta \in \RR^m$) and the split network $\tilde{\Phi}$ (with parameters $\boldsymbol{\eta} \in \RR^{\tilde{m}}$), respectively. Input data $\bx \in \RR^{d_x}$.
The transformation $T$ satisfies the following properties:
\begin{enumerate}
    \item $T$ is a linear isometry: $\norm{T\btheta}_2 = \norm{\btheta}_2$, for all $\btheta \in \RR^m$. 
    \item Output preserving: $\tilde{\Phi}(T\btheta; \bx) = \Phi(\btheta; \bx)$ for all $\btheta \in \RR^m$ and $\bx \in \RR^{d_x}$. 
    \item Subgradient preserving: $\dcirc_{\boldsymbol{\eta}} \tilde{\Phi}(T\btheta; \bx) = T(\dcirc_{\btheta} \Phi(\btheta; \bx))$ for all $\btheta \in \RR^m$ and $\bx \in \RR^{d_x}$. (This implies $\tau(\btheta, \bx) = 1$ in the context of Theorem~\ref{thm:equivalence_kkt_embedding}). 
\end{enumerate}
Consequently, since $T$ satisfies the conditions (specifically, condition 1 with $\tau=1$) of Theorem~\ref{thm:equivalence_kkt_embedding}, this deep neuron splitting transformation $T$ is a KKT point preserving from $P_{\Phi}$ to $P_{\tilde{\Phi}}$.
\end{theorem}
\begin{proof}
The proof proceeds by verifying the three listed properties of $T$.
Isometry (1) follows from comparing the squared Euclidean norms of the parameter vectors.
Output Preservation (2) is demonstrated by tracing the signal propagation through the forward pass of both networks.
Subgradient Equality (3) is established via a careful analysis of the backward pass using chain rules for Clarke subdifferentials.
The preservation of KKT points is then a direct consequence of Theorem~\ref{thm:equivalence_kkt_embedding}, given that $T$ fulfills its required conditions.
The complete proof of properties (1) - (3) is provided in Appendix~\ref{app:proof_deep_split_kkt}.
\end{proof}

\begin{remark}[Iterative splitting and KKT preservation]\label{rem:iterative_splitting_kkt_preservation_concise} 
Theorem~\ref{thm:deep_split_properties_kkt_embedding} establishes that a single deep neuron splitting operation (Definition~\ref{def:deep_neuron_splitting}) is KKT Point Preserving (Definition~\ref{def:kkt_point_preserving_transform}). Since the composition of KKT Point Preserving transformations also yields a KKT Point Preserving transformation (Proposition~\ref{prop:composition_kkt_preserving}), it follows directly that any finite sequence of such deep neuron splitting operations results in a composite transformation that is KKT Point Preserving.
\end{remark}

\subsubsection{Channel splitting in convolutional neural networks}
\label{sec:cnn_split}

To demonstrate the generality of our framework beyond fully-connected architectures, we now extend the KKT Point Embedding Principle to Convolutional Neural Networks (CNNs). Applying our principle to CNNs presents a unique challenge: the transformation must respect the architectural hallmarks of convolutions, namely weight sharing and locality. We address this by designing a novel \textit{channel splitting} transformation that preserves the network function and subgradient structure, thereby satisfying the conditions of Theorem~\ref{thm:equivalence_kkt_embedding}.

\begin{definition}[Channel splitting transformation in deep CNNs]
\label{def:channel_split}
Let $\Phi$ be a deep homogeneous CNN satisfying Assumptions~\ref{as:static}(A1, A2), defined by a sequence of convolutional filters (weights) $\{W^{(l)}\}$. 
A \textbf{channel splitting transformation} $T:\btheta \mapsto \boldsymbol{\eta}$ constructs a new network $\tilde{\Phi}$ from $\Phi$ by splitting a single output channel $j$ of a hidden convolutional layer $k$ into $m_{split}$ new channels.

This transformation is defined by a set of coefficients $c_i \ge 0$ for $i \in [m_{split}]$ that satisfy the isometric condition $\sum_{i=1}^{m_{split}} c_i^2 = 1$. The transformation modifies the filters associated with the split channel as follows:

\begin{itemize}
    \item \textbf{At Layer k}: The filter $W_{j,:}^{(k)}$ that produces output channel $j$ is replaced by $m_{split}$ new filters in the corresponding weight tensor $\tilde{W}^{(k)}$ of $\tilde{\Phi}$. For each $i \in [m_{split}]$, the $i$-th of these new filters is defined as $c_i W_{j,:}^{(k)}$.
    
    \item \textbf{At Layer k+1}: In every filter in the subsequent layer, $W^{(k+1)}$, the input slice corresponding to the original channel $j$ is replaced by $m_{split}$ new input slices. For each $i \in [m_{split}]$, the $i$-th of these new input slices is scaled by the corresponding coefficient $c_i$.
    
    \item \textbf{Other Weights}: All other filters and weights in the network remain unchanged when mapped by $T$.
\end{itemize}
The resulting parameter vector for $\tilde{\Phi}$ is $\boldsymbol{\eta} = T\btheta$.
\end{definition}

This meticulously constructed transformation preserves the functional output of the network while allowing for an isometric embedding of the parameter space. We now state the main result for this section, which shows that this transformation satisfies our core theory.

\begin{theorem}[Properties of CNN channel splitting and KKT embedding]
\label{thm:cnn_kkt_embedding}
Let $T$ be the deep CNN channel splitting transformation defined in Definition~\ref{def:channel_split}. Let $P_{\Phi}$ and $P_{\tilde{\Phi}}$ be the min-norm max-margin problems associated with the original network $\Phi$ and the split network $\tilde{\Phi}$, respectively. The transformation $T$ satisfies the following properties:
\begin{enumerate}
    \item $T$ is a linear isometry: $\|T\btheta\|_2 = \|\btheta\|_2$ for all $\btheta$.
    \item Output preserving: $\tilde{\Phi}(T\btheta; x) = \Phi(\btheta; x)$ for all $\btheta$ and input $x$.
    \item Subgradient preserving: $\partial_{\boldsymbol{\eta}}^{\circ}\tilde{\Phi}(T\btheta; x) = T(\partial_{\btheta}^{\circ}\Phi(\btheta; x))$ for all $\btheta$ and input $x$.
\end{enumerate}
Consequently, since $T$ satisfies the conditions of Theorem~\ref{thm:equivalence_kkt_embedding}, this channel splitting transformation is a KKT point preserving from $P_{\Phi}$ to $P_{\tilde{\Phi}}$.
\end{theorem}
\begin{proof}
The proof is analogous to the one for fully-connected networks (Theorem~\ref{thm:deep_split_properties_kkt_embedding}) and proceeds by verifying the three listed properties of $T$. Isometry (1) is confirmed by comparing the squared Frobenius norms of the filter tensors. Output Preservation (2) is demonstrated by tracing the feature map computations through the forward pass, showing that the split signals perfectly recombine at layer $k+1$. Subgradient Preservation (3) is established by applying the chain rule for Clarke subdifferentials to the backward pass. The complete mathematical details are provided in Appendix~\ref{app:proof_cnn}.
\end{proof}

The successful extension of our principle to CNNs validates our framework as a flexible blueprint applicable to a broad class of homogeneous networks.

\section{Connection to training dynamics via gradient flow} 
\label{sec:dynamic_analysis}

Having established the static KKT Point Embedding Principle, we now investigate its implications for the dynamics of training homogeneous networks using gradient flow. We focus on the scenario where gradient flow converges towards max-margin solutions, a phenomenon linked to specific loss functions. We connect the parameter trajectories and limit directions of the smaller network $\Phi$ and the larger network $\tilde{\Phi}$ related by the neuron splitting transformation $T$.

We analyze the asymptotic directional behavior using the concept of the $\omega$-limit set from dynamical systems theory. Recall that for a trajectory $\mathbf{z}(t)$ evolving in some space, its $\omega$-limit set, denoted $\omega(\mathbf{z}_0)$ (where $\mathbf{z}_0$ is the initial point), is the set of all points $\mathbf{y}$ such that $\mathbf{z}(t_k) \to \mathbf{y}$ for some sequence of times $t_k \to \infty$. Intuitively, it's the set of points the trajectory approaches infinitely often as $t \to \infty$.

For our dynamic setting, we make the following assumptions, building upon the static ones (A1, A2 from Assumption~\ref{as:static}):

\begin{assumption}[Dynamic setting]\label{as:dynamic}
In addition to Assumptions~\ref{as:static}(A1, A2), we assume:\\
(A3) (Loss Smoothness) The per-sample loss function $\ell: \RR \to \RR$ is $\mathcal{C}^1$-smooth and non-increasing. \\
\end{assumption}

Building on this, we first demonstrate that the neuron splitting transformation $T$ preserves the gradient flow trajectory itself. This result relies only on the smoothness of the loss and the network properties.

\begin{theorem}[Trajectory preserving for neuron splitting]\label{thm:gf_trajectory_map_split}
Let $\Phi, \tilde{\Phi}$ be homogeneous networks related by a neuron splitting transformation $T$ (as defined in Theorem \ref{thm:two_layer_split} or \ref{thm:deep_split_properties_kkt_embedding}), satisfying Assumptions \ref{as:static}(A1, A2) and \ref{as:dynamic}(A3). Consider the gradient flow dynamics for the respective losses $\cL(\btheta)$ and $\tilde{\cL}(\boldsymbol{\eta})$. If the initial conditions are related by $\boldsymbol{\eta}(0) = T\btheta(0)$, then the trajectories satisfy $\boldsymbol{\eta}(t) = T\btheta(t)$ for all $t \ge 0$ (assuming solutions exist and norms diverge for normalization where needed, e.g., as per an implicit Assumption (A4) mentioned in dependent definitions/theorems).
\end{theorem}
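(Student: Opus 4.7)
The plan is to show that the curve $\hat{\boldsymbol{\eta}}(t) \coloneqq T\btheta(t)$ is itself a gradient flow arc for $\tilde{\cL}$ starting from $\boldsymbol{\eta}(0) = T\btheta(0)$, and then to invoke uniqueness (up to the implicit regularity hypotheses of the dynamic setting) to conclude $\boldsymbol{\eta}(t) = \hat{\boldsymbol{\eta}}(t)$ for every $t\ge 0$. Since $T$ is linear and $\btheta$ is absolutely continuous on every compact sub-interval of $[0,\infty)$, the composition $\hat{\boldsymbol{\eta}}$ is automatically an arc, with $\dot{\hat{\boldsymbol{\eta}}}(t) = T\dot\btheta(t)$ holding almost everywhere.

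The core step is to verify the differential inclusion $\dot{\hat{\boldsymbol{\eta}}}(t) \in -\dcirc \tilde{\cL}(\hat{\boldsymbol{\eta}}(t))$ for a.e. $t \ge 0$. Fix such a $t$ at which $\dot\btheta(t) \in -\dcirc \cL(\btheta(t))$. Under Assumption~\ref{def:chain_rule_assumption} together with the $\mathcal{C}^1$-smoothness of $\ell$ (Assumption~\ref{as:dynamic}(A3)), one can represent
\[ -\dot\btheta(t) \;=\; \sum_{k=1}^{n} \ell'\!\bigl(y_k \Phi(\btheta(t);\bx_k)\bigr)\, y_k\, \bh_k(t), \qquad \bh_k(t) \in \dcirc_{\btheta} \Phi(\btheta(t);\bx_k). \]
Applying $T$ and invoking output preservation $\tilde\Phi(T\btheta;\bx_k)=\Phi(\btheta;\bx_k)$ together with subgradient preservation $T\dcirc_{\btheta}\Phi(\btheta;\bx_k) = \dcirc_{\boldsymbol{\eta}}\tilde\Phi(T\btheta;\bx_k)$ (with proportionality constant $\tau=1$), both supplied by Theorem~\ref{thm:two_layer_split} in the two-layer case and by Theorem~\ref{thm:deep_split_properties_kkt_embedding} in the deep case, I obtain
\[ -T\dot\btheta(t) \;=\; \sum_{k=1}^{n} \ell'\!\bigl(y_k \tilde\Phi(\hat{\boldsymbol{\eta}}(t);\bx_k)\bigr)\, y_k\, T\bh_k(t), \]
where each $T\bh_k(t) \in \dcirc_{\boldsymbol{\eta}}\tilde\Phi(\hat{\boldsymbol{\eta}}(t);\bx_k)$. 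A second application of the admissible chain rule (Assumption~\ref{def:chain_rule_assumption}) to $\tilde\cL$ identifies the right-hand side as an element of $\dcirc \tilde{\cL}(\hat{\boldsymbol{\eta}}(t))$, giving $\dot{\hat{\boldsymbol{\eta}}}(t) \in -\dcirc\tilde{\cL}(\hat{\boldsymbol{\eta}}(t))$, as desired.

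The principal obstacle is not this calculation but the step from \emph{``$\hat{\boldsymbol{\eta}}$ is a GF arc''} to \emph{``$\hat{\boldsymbol{\eta}} \equiv \boldsymbol{\eta}$''}: uniqueness of solutions to the Clarke subdifferential inclusion is in general delicate. My plan is to argue uniqueness within the admissible-chain-rule framework used throughout the paper, where the chain rule permits a Gr\"onwall-type estimate on $\tfrac12\norm{\boldsymbol{\eta}(t)-\hat{\boldsymbol{\eta}}(t)}_2^2$ using the local Lipschitz property of $\bx\mapsto \ell'(\bx)$ on bounded sets and the local Lipschitz property of $\btheta \mapsto \Phi(\btheta;\bx_k)$, both inherited from Assumption~\ref{as:static}(A1). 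The isometry of $T$ (Theorem~\ref{thm:two_layer_split}(3), Theorem~\ref{thm:deep_split_properties_kkt_embedding}(1)) ensures that $\norm{\hat{\boldsymbol{\eta}}(t)}_2 = \norm{\btheta(t)}_2$, so any norm-divergence assumption transfers without modification between the two trajectories. Combining the differential inclusion verified above with uniqueness and the matching initial condition yields $\boldsymbol{\eta}(t) = T\btheta(t)$ for all $t\ge 0$, completing the proof.
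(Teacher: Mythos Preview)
Your proposal is correct and follows essentially the same route as the paper: verify that $T\btheta(t)$ satisfies the differential inclusion for $\tilde{\cL}$ via output and subgradient preservation, then invoke uniqueness of solutions. The only difference is that the paper dispenses with the uniqueness step by direct appeal to the implicit Assumption~(A4) rather than attempting a Gr\"onwall-type argument; your extra caution there is commendable but unnecessary given the paper's standing hypotheses.
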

\begin{proof}
The proof relies on showing that the subdifferential of the loss function transforms according to $T(\dcirc \cL(\btheta)) = \dcirc \tilde{\cL}(T\btheta)$, which follows from the output Preservation and subgradient equality properties of $T$ (verified in Theorems~\ref{thm:two_layer_split}, \ref{thm:deep_split_properties_kkt_embedding}) and the chain rule applied with the $\mathcal{C}^1$-smooth loss $\ell$ (Assumption A3 from \ref{as:dynamic}). Full details are in Appendix~\ref{app:proof_gf_map}.
\end{proof}

This trajectory mapping allows us to relate the asymptotic directional behavior. We first define the set of directional limit points.

\begin{definition}[$\omega$-limit set of the normalized trajectory]\label{def:omega_limit_set_normalized}
For a given gradient flow trajectory $\btheta(t)$ starting from $\btheta(0)$ (with $\btheta \in \RR^m$) under Assumptions \ref{as:static}(A1, A2) and \ref{as:dynamic}(A3) (and implicitly assuming trajectory properties like norm divergence for normalization, often denoted as (A4) in related theorems), let $\bar{\btheta}(t) = \btheta(t) / \norm{\btheta(t)}_2$ be the normalized trajectory. The $\omega$-limit set \citep[see, e.g.,][for the general theory and properties]{hirsch2013differential} of this normalized trajectory $\bar{\btheta}(t)$ is defined as
\[  \omega(\bar{\btheta}) \coloneqq \left\{ \bx \in \mathbb{S}^{m-1} \mid \exists \{t_k\}_{k=1}^\infty \text{ s.t. } t_k \to \infty \text{ and } \bar{\btheta}(t_k) \to \bx \text{ as } k \to \infty \right\}, \]
where $\mathbb{S}^{m-1}$ is the unit sphere in the parameter space $\RR^m$ of $\btheta$. This set $\omega(\bar{\btheta})$ contains all directional accumulation points of the trajectory $\btheta(t)$. Since $\bar{\btheta}(t)$ lies in the compact set $\mathbb{S}^{m-1}$, $L(\btheta(0))$ is non-empty and compact. If the direction $\bar{\btheta}(t)$ converges to a unique limit $\overline{\btheta}^*$, then $\omega(\bar{\btheta})= \{\overline{\btheta}^*\}$.
\end{definition}

The following theorem shows that the transformation $T$ provides an exact mapping between the $\omega$-limit sets of the normalized trajectories. This relies on the trajectory mapping (Theorem~\ref{thm:gf_trajectory_map_split}) and the properties of $T$.

\begin{theorem}[Mapping of $\omega$-limit sets of normalized trajectories]\label{thm:omega_limit_set_map}
Let $\Phi, \tilde{\Phi}$ be homogeneous networks related by a neuron splitting transformation $T$ (Theorem \ref{thm:two_layer_split} or \ref{thm:deep_split_properties_kkt_embedding}), satisfying Assumptions \ref{as:static}(A1, A2), Assumption \ref{as:dynamic}(A3), and further assuming trajectory properties (A4: unique solution existence and norm divergence for $\btheta(t)$ when data is classifiable). Let $\btheta(t)$ (in $\RR^m$) and $\boldsymbol{\eta}(t)$ (in $\RR^{\tilde{m}}$) be gradient flow trajectories starting from $\btheta(0)$ and $\boldsymbol{\eta}(0) = T\btheta(0)$ respectively. Let $L(\btheta(0)) \subseteq \mathbb{S}^{m-1}$ and $L(\boldsymbol{\eta}(0)) \subseteq \mathbb{S}^{l-1}$ be the $\omega$-limit sets of the respective normalized trajectories (Definition~\ref{def:omega_limit_set_normalized}). Then, these sets are related by $T$: 
\[ T(L(\btheta(0))) = L(\boldsymbol{\eta}(0)) \]
where $T(L(\btheta(0))) = \{ T\bx \mid \bx \in L(\btheta(0)) \}$.
\end{theorem}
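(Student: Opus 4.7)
The plan is to leverage Theorem~\ref{thm:gf_trajectory_map_split} together with the isometry property of $T$ (established in Theorems~\ref{thm:two_layer_split} and \ref{thm:deep_split_properties_kkt_embedding}) to reduce the statement to a clean topological argument about how continuous maps interact with $\omega$-limit sets. The starting observation is that since $\boldsymbol{\eta}(t) = T\btheta(t)$ and $\norm{T\btheta(t)}_2 = \norm{\btheta(t)}_2$, the normalized trajectories satisfy
\[ \bar{\boldsymbol{\eta}}(t) \;=\; \frac{T\btheta(t)}{\norm{T\btheta(t)}_2} \;=\; \frac{T\btheta(t)}{\norm{\btheta(t)}_2} \;=\; T\bar{\btheta}(t), \]
so the problem reduces to showing that the $\omega$-limit set of $T\bar{\btheta}(t)$ on $\mathbb{S}^{\tilde{m}-1}$ equals the $T$-image of the $\omega$-limit set of $\bar{\btheta}(t)$ on $\mathbb{S}^{m-1}$. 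Assumption (A4) is used here only to guarantee that $\btheta(t)$ exists globally and $\norm{\btheta(t)}_2 > 0$ for large $t$ so the normalization is well-defined.

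For the forward inclusion $T(L(\btheta(0))) \subseteq L(\boldsymbol{\eta}(0))$, I would take any $\bx \in L(\btheta(0))$ with a witnessing sequence $t_k \to \infty$ such that $\bar{\btheta}(t_k) \to \bx$. Continuity of the linear map $T$ gives $T\bar{\btheta}(t_k) \to T\bx$, and the isometry property forces $\norm{T\bx}_2 = 1$, so $T\bx \in \mathbb{S}^{\tilde{m}-1}$. Combined with the identity $\bar{\boldsymbol{\eta}}(t_k) = T\bar{\btheta}(t_k)$, this places $T\bx$ in $L(\boldsymbol{\eta}(0))$.

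For the reverse inclusion $L(\boldsymbol{\eta}(0)) \subseteq T(L(\btheta(0)))$, I would fix $\by \in L(\boldsymbol{\eta}(0))$ with a sequence $t_k \to \infty$ such that $\bar{\boldsymbol{\eta}}(t_k) = T\bar{\btheta}(t_k) \to \by$. Since $\bar{\btheta}(t_k) \in \mathbb{S}^{m-1}$ and the unit sphere is compact, I pass to a subsequence $t_{k_j}$ along which $\bar{\btheta}(t_{k_j}) \to \bx$ for some $\bx \in \mathbb{S}^{m-1}$. Continuity of $T$ yields $T\bar{\btheta}(t_{k_j}) \to T\bx$, while uniqueness of limits for the original convergent sequence forces $T\bx = \by$. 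Since $t_{k_j} \to \infty$ and $\bar{\btheta}(t_{k_j}) \to \bx$, we obtain $\bx \in L(\btheta(0))$ and hence $\by = T\bx \in T(L(\btheta(0)))$.

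There is no real obstacle here beyond being careful about what Assumption (A4) provides: the trajectory preservation of Theorem~\ref{thm:gf_trajectory_map_split} and the isometry property do all the heavy lifting, and the rest is a standard compactness-plus-continuity argument. The mildest technical point to flag in the written-up proof is verifying that the normalization is well-defined on both sides simultaneously (which follows because $T$ is an isometry, so $\norm{\boldsymbol{\eta}(t)}_2 = 0$ iff $\norm{\btheta(t)}_2 = 0$); this ensures the two normalized trajectories are defined on the same set of times and justifies the identity $\bar{\boldsymbol{\eta}}(t) = T\bar{\btheta}(t)$ throughout the analysis.
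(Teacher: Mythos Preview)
Your proposal is correct and follows essentially the same approach as the paper's proof: both establish the identity $\bar{\boldsymbol{\eta}}(t) = T\bar{\btheta}(t)$ via trajectory mapping and isometry, then prove the two inclusions using continuity of $T$ for the forward direction and compactness of the sphere plus uniqueness of limits for the reverse direction. Your additional remark about normalization being well-defined simultaneously on both sides is a nice clarification the paper leaves implicit.
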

\begin{proof}
The proof shows inclusions in both directions using the continuity of $T$, the trajectory mapping $\bar{\boldsymbol{\eta}}(t) = T(\bar{\btheta}(t))$ (derived from Theorem~\ref{thm:gf_trajectory_map_split} and isometry of $T$), and compactness arguments. It relies on Assumption (A4) for the existence and divergence of trajectories needed for normalization. The proof does not require the uniqueness of limits. The full proof is in Appendix~\ref{app:proof_omega_limit_map}.
\end{proof}

\begin{corollary}[Embedding of unique limit directions]\label{cor:unique_limit_direction_embedding}
Let $\Phi, \tilde{\Phi}$ be homogeneous networks related by a neuron splitting transformation $T$ (Theorem \ref{thm:two_layer_split} or \ref{thm:deep_split_properties_kkt_embedding}), satisfying Assumptions \ref{as:static}(A1, A2), Assumption \ref{as:dynamic}(A3). Let $\btheta(t)$ and $\boldsymbol{\eta}(t)$ be gradient flow trajectories starting from $\btheta(0)$ and $\boldsymbol{\eta}(0) = T\btheta(0)$ respectively.
If the normalized trajectory $\bar{\btheta}(t) = \btheta(t) / \norm{\btheta(t)}_2$ converges to a unique limit direction $\overline{\btheta}^*$ as $t \to \infty$, then the corresponding normalized trajectory $\bar{\boldsymbol{\eta}}(t) = \boldsymbol{\eta}(t) / \norm{\boldsymbol{\eta}(t)}_2$ converges to the unique limit direction $T\overline{\btheta}^*$.
\end{corollary}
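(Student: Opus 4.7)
The plan is to combine Theorem~\ref{thm:gf_trajectory_map_split} (trajectory preservation) with the isometry property of $T$ to obtain the pointwise identity $\bar{\boldsymbol{\eta}}(t) = T\bar{\btheta}(t)$ for all sufficiently large $t$, and then to invoke the continuity of the linear map $T$ to transfer convergence of $\bar{\btheta}(t)$ to convergence of $\bar{\boldsymbol{\eta}}(t)$. Since this corollary is essentially an immediate consequence of results already established in the paper, I do not anticipate any significant obstacle; the only mild care needed is to confirm that the normalized trajectory $\bar{\boldsymbol{\eta}}(t)$ is well-defined, which is ensured by the isometry property.

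First I would invoke Theorem~\ref{thm:gf_trajectory_map_split}, which, under the stated assumptions on $\Phi, \tilde{\Phi}, T$ and the initial condition $\boldsymbol{\eta}(0) = T\btheta(0)$, gives $\boldsymbol{\eta}(t) = T\btheta(t)$ for all $t \ge 0$. Next, since $T$ is a linear isometry by Theorem~\ref{thm:two_layer_split} (two-layer case) or Theorem~\ref{thm:deep_split_properties_kkt_embedding} (deep case), we have $\norm{\boldsymbol{\eta}(t)}_2 = \norm{T\btheta(t)}_2 = \norm{\btheta(t)}_2$. The hypothesis that $\bar{\btheta}(t)$ converges presupposes $\btheta(t) \ne \bzero$ for $t$ large, and the isometry forces $\boldsymbol{\eta}(t) = T\btheta(t) \ne \bzero$ as well, so $\bar{\boldsymbol{\eta}}(t)$ is well-defined. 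Using linearity of $T$ together with the norm identity then yields
\[ \bar{\boldsymbol{\eta}}(t) = \frac{T\btheta(t)}{\norm{T\btheta(t)}_2} = \frac{T\btheta(t)}{\norm{\btheta(t)}_2} = T\bar{\btheta}(t). \]

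Finally, since $T$ is continuous (as a finite-dimensional linear map), the hypothesis $\bar{\btheta}(t) \to \overline{\btheta}^*$ immediately gives $\bar{\boldsymbol{\eta}}(t) = T\bar{\btheta}(t) \to T\overline{\btheta}^*$. I would also verify that $T\overline{\btheta}^*$ lies on the appropriate unit sphere, which is immediate from the isometry: $\norm{T\overline{\btheta}^*}_2 = \norm{\overline{\btheta}^*}_2 = 1$. A fully equivalent, equally short route is to observe that convergence of $\bar{\btheta}(t)$ to $\overline{\btheta}^*$ is the same as saying the $\omega$-limit set satisfies $L(\btheta(0)) = \{\overline{\btheta}^*\}$, so Theorem~\ref{thm:omega_limit_set_map} yields $L(\boldsymbol{\eta}(0)) = T(L(\btheta(0))) = \{T\overline{\btheta}^*\}$, and a continuous trajectory on the compact unit sphere whose $\omega$-limit set is a singleton must converge to that point.
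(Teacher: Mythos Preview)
Your proposal is correct. Your primary route is slightly more direct than the paper's: you invoke Theorem~\ref{thm:gf_trajectory_map_split} and the isometry of $T$ to obtain the pointwise identity $\bar{\boldsymbol{\eta}}(t) = T\bar{\btheta}(t)$, and then pass to the limit using continuity of $T$. The paper instead takes the route you mention at the end as an alternative: it observes $L(\btheta(0)) = \{\overline{\btheta}^*\}$, applies Theorem~\ref{thm:omega_limit_set_map} to get $L(\boldsymbol{\eta}(0)) = \{T\overline{\btheta}^*\}$, and concludes convergence from the singleton $\omega$-limit set. Your direct argument is marginally more elementary (it avoids the $\omega$-limit machinery, which in turn already re-derives $\bar{\boldsymbol{\eta}}(t) = T\bar{\btheta}(t)$ inside its own proof), while the paper's route has the virtue of making the corollary an immediate specialization of the preceding theorem. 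Since you explicitly give both arguments, there is nothing to add.
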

\begin{proof}
Proof deferred to Appendix~\ref{app:proof_cor_unique_limit_direction_embedding}.
\end{proof}

Corollary~\ref{cor:unique_limit_direction_embedding}, which shows the embedding of unique limit directions ($T\overline{\btheta}^*$), is particularly relevant given the optimization dynamics observed in homogeneous neural networks. As established in prior work (e.g., \citep{lyu2019gradient, ji2020directional}), when training such homogeneous models (including fully-connected and convolutional networks with ReLU-like activations) with common classification losses such as logistic or cross-entropy loss, methods like gradient descent or gradient flow often steer the parameter direction $\overline{\btheta}(t)$ to align with solutions that maximize the (normalized) classification margin. These margin-maximizing directions are typically characterized as KKT points of an associated constrained optimization problem, akin to $P_\Phi$ discussed earlier (Definition~\ref{def:P_phi}). Therefore, our corollary, by demonstrating that the transformation $T$ preserves unique limit directions, implies that this neuron splitting mechanism effectively embeds the structure of these significant, KKT-aligned, margin-maximizing directions from a smaller network into a larger one. This preservation of margin-related properties through embedding is also noteworthy due to the well-recognized connection between classification margin and model robustness.

\paragraph{Empirical Validation and Implications \protect\footnote{Code available: \url{https://github.com/Silentmoonlight/kkt-embedding-principle}} }

A key prediction of our theoretical results is the principle of trajectory preservation (Theorem~\ref{thm:gf_trajectory_map_split}). To justify this claim empirically, we conducted experiments verifying this principle using discrete-time gradient descent. We trained pairs of narrow and wide homogeneous MLPs on a 2D linearly separable dataset (Exp. 1), with full implementation details deferred to Appendix~\ref{app:exp_details}.

Figure~\ref{fig:exp_results} presents the results. As shown in the left panel of Figure~\ref{fig:exp_results}, the trajectory error, $\norm{\boldsymbol{\eta}(t) - T\btheta(t)}_2$, remains at the level of machine precision ($\sim 10^{-13}$) throughout training. Concurrently, the right panel shows the training loss converging towards zero, indicating a successful learning process. These results provide strong empirical validation for our theoretical claim under standard GD. Appendix~\ref{app:exp_details} summarizes further experiments demonstrating robustness to SGD (with identical batch sequences) and non-separable data.

A direct practical implication of trajectory preservation is that the function learned by the wider network, $\tilde{\Phi}$, is identical to that of its narrower counterpart, $\Phi$, at every training step ($\tilde{\Phi}(\boldsymbol{\eta}(t); \cdot) = \Phi(\btheta(t); \cdot)$). This suggests that widening a network via our proposed splitting transformation can create significant parameter redundancy without altering the optimization path in function space. This result empirically supports the idea that solutions from simpler models are structurally embedded within their overparameterized counterparts.

\begin{figure}[h!]
    \centering
    \includegraphics[width=0.95\textwidth]{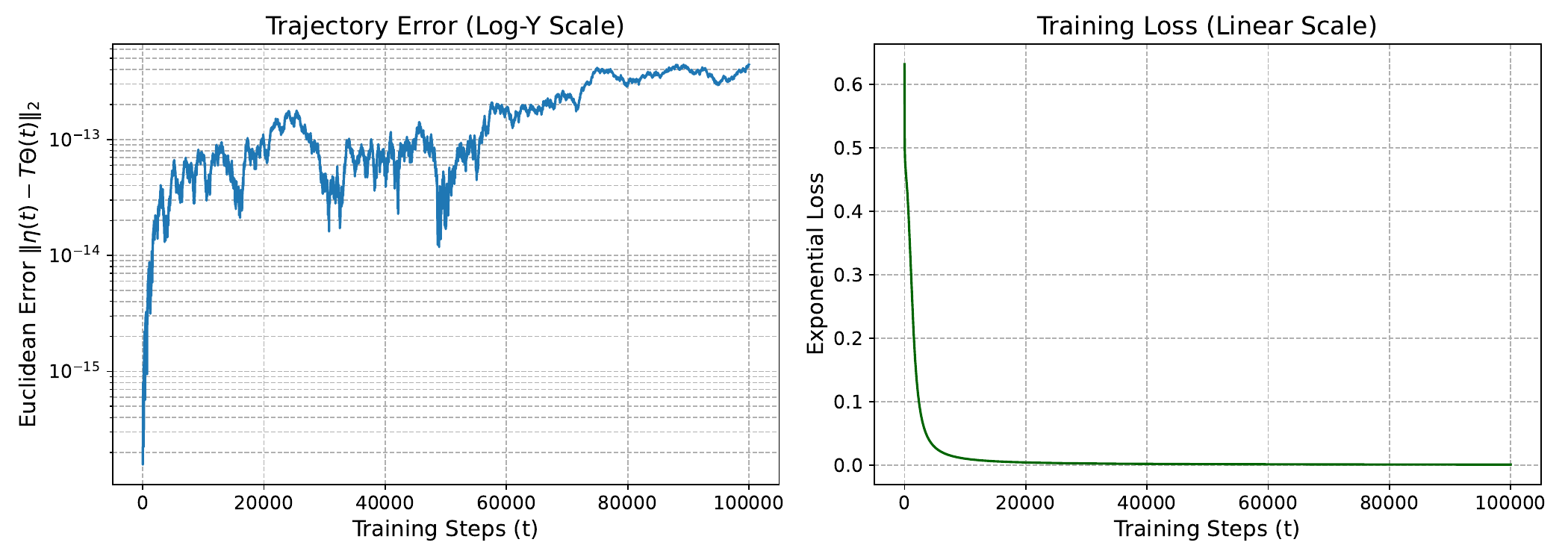} 
    \caption{
        Empirical validation using MLP with GD on 2D toy data (Exp. 1). 
        \textbf{(Left)} The trajectory error remains near machine precision throughout training. 
        \textbf{(Right)} The training loss converges towards zero, indicating successful training.
    }
    \label{fig:exp_results} 
\end{figure}

\section{Conclusion} 
\label{sec:conclusion}

This paper introduced the KKT Point Embedding Principle for homogeneous neural networks in max-margin classification. We established that specific neuron splitting transformations $T$, which are linear isometries, map KKT points of the associated min-norm problem $P_\Phi$ for a smaller network to those of an augmented network $P_{\tilde{\Phi}}$ (Theorems \ref{thm:equivalence_kkt_embedding}, \ref{thm:deep_split_properties_kkt_embedding}, \ref{thm:cnn_kkt_embedding}.) This static embedding was then connected to training dynamics: we proved that $T$ preserves gradient flow trajectories and maps the $\omega$-limit sets of parameter directions ($T(L(\btheta(0))) = L(\boldsymbol{\eta}(0))$) (Theorems \ref{thm:gf_trajectory_map_split}, \ref{thm:omega_limit_set_map}). Consequently, this framework allows for the dynamic preservation of KKT directional alignment through the embedding when such convergence occurs 

Our work establishes a foundational KKT point embedding principle and its dynamic implications, leading to several natural further questions:
\begin{itemize}
    \item What are the conditions for transformations $T$ to preserve KKT point structures in other homogeneous network architectures, such as Convolutional Neural Networks (CNNs)? This would likely involve designing transformations $T$ that respect architectural specificities (e.g., locality, weight sharing in CNNs) while satisfying the necessary isometric and output/subgradient mapping properties identified in our current work.
    \item Can the KKT Point Embedding Principle be generalized to non-homogeneous neural networks, for instance, those incorporating bias terms or employing activation functions that are not positively 1-homogeneous? This presents a significant challenge as homogeneity is central to the current analysis, and new theoretical approaches might be required to define and analyze analogous embedding phenomena.
\end{itemize}

 \begin{ack}
This work is sponsored by the National Key R\&D Program of China (Grant No. 2022YFA1008200, T. L., Y. Z.), the Natural Science Foundation of China (No. 1257010106, Y. Z.), and the Natural Science Foundation of Shanghai (No. 25ZR1402280, Y. Z.). We also thank Shanghai Institute for Mathematics and Interdisciplinary Sciences (SIMIS) for their financial support. This research was funded by SIMIS under grant number SIMIS-ID-2025-ST (T. L.). The authors are grateful for the resources and facilities provided by SIMIS, which were essential for the completion of this work.
 \end{ack}



{ 
\small

\bibliographystyle{plainnat} 
\bibliography{reference} 

}

\appendix

\newpage

\section{Mathematical preliminaries and illustrative example}
\label{app:preliminaries_and_example}

\begin{definition}[Clarke's subdifferential]\label{def:clarke_subdifferential}
For a locally Lipschitz function $ f: X \to \RR $, the Clarke subdifferential\citep{clarke1975generalized}  at $ \bx \in X $ is
\[ \dcirc f(\bx) \coloneqq \conv \left\{ \lim _{k \to \infty} \nabla f(\bx_{k}) : \bx_{k} \to \bx, f \text{ is differentiable at } \bx_{k} \right\}, \]
where $\conv$ denotes the convex hull. If $f$ is $\mathcal{C}^1$ near $\bx$, $\dcirc f(\bx) = \{\nabla f(\bx)\}$.
\end{definition}

\begin{definition}[Arc and admissible chain rule]\label{def:arc_and_chain_rule} 
We say that a function $z : I \to \RR^d$ on the interval $I$ is an \textbf{arc} if $z$ is absolutely continuous for any compact sub-interval of $I$. For an arc $z$, $\dot{z}(t)$ (or $\frac{d}{dt}z(t)$) stands for the derivative at $t$ if it exists.
Following the terminology in \citet{davis2020stochastic}, we say that a locally Lipschitz function $f : \RR^d \to \RR$ \textbf{admits a chain rule} if for any arc $z :[0, +\infty) \to \RR^d$, for all $h \in \dcirc f$, the equality $(f \circ z)'(t) = \langle h, \dot{z}(t) \rangle$ holds for a.e. $t> 0$.
\end{definition}

\begin{assumption}[Admissibility of subdifferential chain rule]\label{def:chain_rule_assumption} 
We assume throughout that the neural networks and loss functions involved are such that standard chain rules for Clarke subdifferentials apply as needed for backpropagation. Specifically, for compositions like $\cL(\btheta) = \sum_{k=1}^{n} \ell(y_k \Phi(\btheta; \bx_k))$, we assume the subdifferential $\dcirc \cL(\btheta)$ can be computed by propagating subgradients layer-wise. Conditions ensuring this are discussed in, e.g., \citet{davis2020stochastic, bolte2021conservative}.
\end{assumption}

\begin{theorem}[Two-layer neuron splitting preserves KKT points]\label{thm:two_layer_split}
Consider a single hidden neuron network $\Phi(\btheta; \bx) = a \sigma(\bb^{\top} \bx)$ with parameters $\btheta = (a, \bb^{\top}) \in \RR^{1+d_x}$ (where $a \in \RR, \bb \in \RR^{d_x}$), and $\sigma$ is a positive 1-homogeneous activation function (e.g., ReLU) satisfying Assumption \ref{as:static}(A1). 
Consider a network $\tilde{\Phi}(\boldsymbol{\eta}; \bx) = \sum_{i=1}^k a_i \sigma(\bb_i^{\top} \bx)$ with parameters $\boldsymbol{\eta} = (a_1, \dots, a_k, \bb_1^{\top}, \dots, \bb_k^{\top}) \in \RR^{k(1+d_x)}$. 
Define the linear transformation $T: \RR^{1+d_x} \to \RR^{k(1+d_x)}$ by $T(a, \bb^{\top}) = (c_1 a, \dots, c_k a, c_1 \bb^{\top}, \dots, c_k \bb^{\top})$, where $c_i \ge 0$ are splitting coefficients satisfying $\sum_{i=1}^k c_i^2 = 1$. 
Then $T$ satisfies:
\begin{enumerate} 
    \item Output preserving: $\tilde{\Phi}(T\btheta; \bx) = \Phi(\btheta; \bx)$, for all $\btheta \in \RR^{1+d_x}$ and $\bx \in \RR^{d_x}$. 
    \item Subgradient preserving: $\dcirc_{\boldsymbol{\eta}} \tilde{\Phi}(T\btheta; \bx) = T(\dcirc_{\btheta} \Phi(\btheta; \bx))$, for all $\btheta \in \RR^{1+d_x}$ and  $\bx \in \RR^{d_x}$. 
    \item Isometry: $T$ is a linear isometry ($\norm{T\btheta}_2 = \norm{\btheta}_2$ for all $\btheta \in \RR^{1+d_x}$). 
\end{enumerate}
Consequently, by Theorem \ref{thm:equivalence_kkt_embedding}, this neuron splitting transformation $T$ is a KKT point preserving from $P_{\Phi}$ and $P_{\tilde{\Phi}}$.
\end{theorem}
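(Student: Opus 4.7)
My plan is to verify each of the three structural properties of $T$ in turn (output preservation, subgradient preservation, isometry) and then invoke Theorem~\ref{thm:equivalence_kkt_embedding} for the KKT-preserving conclusion. Output preservation is essentially bookkeeping: expanding $\tilde{\Phi}(T\btheta;\bx)=\sum_{i=1}^{k}(c_i a)\sigma(c_i\bb^{\top}\bx)$, using positive 1-homogeneity of $\sigma$ (available because $c_i\ge 0$) to pull each $c_i$ out of $\sigma$, and collapsing the sum via $\sum_i c_i^{2}=1$ recovers $\Phi(\btheta;\bx)$. The isometry claim is equally short: $\|T\btheta\|_2^{2}=\bigl(\sum_i c_i^{2}\bigr)\bigl(a^{2}+\|\bb\|_2^{2}\bigr)=\|\btheta\|_2^{2}$.

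For subgradient preservation I would exploit the separable block structure of $\tilde{\Phi}(\boldsymbol{\eta};\bx)=\sum_{i}a_i\sigma(\bb_i^{\top}\bx)$: since the pairs $(a_i,\bb_i)$ occupy disjoint coordinate blocks, the Clarke subdifferential factorizes as a direct product of per-neuron subdifferentials. At $\boldsymbol{\eta}=T\btheta$ the $i$-th block is evaluated at $(c_ia,c_i\bb)$ and, using positive 1-homogeneity of $\sigma$ to identify $\sigma(c_i\bb^{\top}\bx)=c_i\sigma(\bb^{\top}\bx)$ and $\dcirc\sigma(c_i\bb^{\top}\bx)=\dcirc\sigma(\bb^{\top}\bx)$ for $c_i>0$, yields $\{(c_i\sigma(\bb^{\top}\bx),\,c_i a s\bx):s\in\dcirc\sigma(\bb^{\top}\bx)\}$. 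Reassembling the $k$ blocks and pattern-matching against the definition of $T$ should reproduce exactly $T(\dcirc_{\btheta}\Phi(\btheta;\bx))$. With all three properties verified, Theorem~\ref{thm:equivalence_kkt_embedding} delivers the KKT-preserving statement directly.

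I expect the main obstacle to lie at the non-smooth locus $\bb^{\top}\bx=0$, where $\dcirc\sigma(0)$ is genuinely set-valued (for example $[0,1]$ for ReLU), together with the degenerate case $c_i=0$. There the factorized description of $\dcirc_{\boldsymbol{\eta}}\tilde{\Phi}(T\btheta;\bx)$ a priori allows independent selections $s_i\in\dcirc\sigma(0)$ across the $k$ neurons, whereas $T(\dcirc_{\btheta}\Phi)$ couples every block to a single common scalar $s$. Reconciling the two descriptions cleanly requires invoking the admissible chain rule of Assumption~\ref{def:chain_rule_assumption} (and, if needed, the conservative-field convention of \citet{bolte2021conservative}) so that the subgradient selection used for backpropagation is a single consistent choice across neurons, and separately verifying that any coordinate block with $c_i=0$ contributes only $\{(0,\bzero)\}$, which automatically lies in the image of $T$. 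Once this reconciliation is in place, the three conditions are established and Theorem~\ref{thm:equivalence_kkt_embedding} finishes the argument.
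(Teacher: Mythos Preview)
Your proposal is correct and follows essentially the same route as the paper's proof: direct verification of output preservation via 1-homogeneity and $\sum_i c_i^{2}=1$, isometry by the same norm computation, and subgradient preservation by computing both sides block-by-block using 0-homogeneity of $\dcirc\sigma$ and then matching. Your caution about the non-smooth locus $\bb^{\top}\bx=0$ is well placed---the paper handles it exactly as you anticipate, by invoking Assumption~\ref{def:chain_rule_assumption} so that a single selection $g\in\dcirc\sigma(\bb^{\top}\bx)$ is used consistently across all $k$ blocks rather than allowing independent $g_i$'s (which the pure Clarke product rule would otherwise permit); the $c_i=0$ case is likewise absorbed since that block's contribution collapses to zero.
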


\begin{proof}
The proof involves directly verifying the three properties using the definitions of $\Phi$, $\tilde{\Phi}$, $T$, and subdifferential calculus. Let $\btheta = (a, \bb^{\top})^{\top}$.

\textbf{1. Output Preservation}:
We compute $\tilde{\Phi}$ at $\boldsymbol{\eta} = T\btheta$:
\begin{align*}
    \tilde{\Phi}(T\btheta; \bx) &= \sum_{i=1}^k a_i \sigma(\bb_i^{\top} \bx) \\
    &= \sum_{i=1}^k (c_i a) \sigma((c_i \bb)^{\top} \bx) \tag{Substituting transformed parameters} \\
    &= \sum_{i=1}^k (c_i a) \sigma(c_i (\bb^{\top} \bx)) \\
    &= \sum_{i=1}^k (c_i a) (c_i \sigma(\bb^{\top} \bx)) \tag{Using 1-homogeneity of $\sigma$} \\
    &= \left(\sum_{i=1}^k c_i^2\right) a \sigma(\bb^{\top} \bx) \\
    &= 1 \cdot a \sigma(\bb^{\top} \bx) \tag{Since $\sum c_i^2 = 1$} \\
    &= \Phi(\btheta; \bx)
\end{align*}
Thus, the network output is preserved.

\textbf{2. Subgradient Preservation}:
Let $z = \bb^{\top} \bx$. The Clarke subdifferential of $\Phi$ w.r.t. $\btheta$ is $\dcirc_{\btheta} \Phi(\btheta; \bx) = (\dcirc_a \Phi, \dcirc_{\bb} \Phi)$, where $\dcirc_a \Phi = \{\sigma(z)\}$ and $\dcirc_{\bb} \Phi = \{a \cdot g \cdot \bx \mid g \in \dcirc \sigma(z)\}$.
Applying $T$ to an element of $\dcirc_{\btheta} \Phi(\btheta; \bx)$ yields a vector with components corresponding to $(c_1 \sigma(z), \dots, c_k \sigma(z))$ for the '$a$' parts and $(c_1 a g \bx, \dots, c_k a g \bx)$ for the '$\bb$' parts.
Next, we compute the subdifferential of $\tilde{\Phi}$ at $\boldsymbol{\eta} = T\btheta$. For each component $j \in [k]$:
\begin{itemize}
    \item $\dcirc_{a_j} \tilde{\Phi} = \{\sigma(\bb_j^{\top} \bx)\} = \{\sigma(c_j z)\} = \{c_j \sigma(z)\}$
    \item $\dcirc_{\bb_j} \tilde{\Phi} = \{a_j \cdot g_j \cdot \bx \mid g_j \in \dcirc\sigma(\bb_j^{\top} \bx)\} = \{c_j a \cdot g \cdot \bx \mid g \in \dcirc \sigma(z)\}$
\end{itemize}
The equalities follow from the 1-homogeneity of $\sigma$ and the 0-homogeneity of its subdifferential $\dcirc\sigma$. Assembling these partials for a chosen $g \in \dcirc\sigma(z)$ shows that any element of $\dcirc_{\boldsymbol{\eta}} \tilde{\Phi}(T\btheta; \bx)$ matches the structure of an element in $T(\dcirc_{\btheta} \Phi(\btheta; \bx))$. Thus, the sets are identical.

\textbf{3. Isometry}:
We compute the squared Euclidean norm of $T\btheta$:
\begin{align*}
    \|T\btheta\|_2^2 &= \sum_{i=1}^k (c_i a)^2 + \sum_{i=1}^k \|c_i \bb\|_2^2 \\
    &= \left(\sum_{i=1}^k c_i^2\right) a^2 + \left(\sum_{i=1}^k c_i^2\right) \|\bb\|_2^2 \\
    &= 1 \cdot (a^2 + \|\bb\|_2^2) \tag{Since $\sum c_i^2 = 1$} \\
    &= \|\btheta\|_2^2
\end{align*}
Since $\|T\btheta\|_2 = \|\btheta\|_2$ for all $\btheta$, $T$ is a linear isometry.
\end{proof}
\section{Proofs from section~\ref{sec:static_analysis}}
\label{app:proofs_static}

\subsection{Proof of theorem~\ref{thm:general_kkt_map}}
\label{app:proof_general_kkt_map}
Let $\btheta^{*}$ be a KKT point of ($P$). We aim to show that $\boldsymbol{\eta}^{*} = T \btheta^{*}$ satisfies the KKT conditions (Def.~\ref{def:kkt}) for problem ($\tilde{P}$).

\textit{1. Primal Feasibility (for $\tilde{P}$):} By definition, $\btheta^*$ is feasible for ($P$), meaning $g_k(\btheta^*) \leq 0$ for all $k \in [n]$. Using Condition 1, we have $\tilde{g}_{k}(\boldsymbol{\eta}^{*}) = \tilde{g}_{k}(T\btheta^{*}) = g_{k}(\btheta^{*}) \leq 0$ for all $k \in [n]$. Thus, $\boldsymbol{\eta}^{*}$ is feasible for ($\tilde{P}$).

\textit{2. Stationarity and Dual Variables (for $\tilde{P}$):} Since $\btheta^*$ is a KKT point of ($P$), there exist dual variables $\lambda_k \geq 0$, $k \in [n]$, satisfying complementary slackness ($\lambda_k g_k(\btheta^*) = 0$) and the stationarity condition: $\bzero \in \dcirc f(\btheta^{*})+\sum_{k=1}^{n} \lambda_{k} \dcirc g_{k}(\btheta^{*})$.
This means there exist specific subgradient vectors $\bh_{0} \in \dcirc f(\btheta^{*})$ and $\bh_{k} \in \dcirc g_{k}(\btheta^{*})$ for $k \in [n]$ such that $\bh_{0}+\sum_{k=1}^{n} \lambda_{k} \bh_{k}=\bzero$.

Applying the linear transformation $T$ to this equation yields:
$T(\bh_{0}+\sum_{k=1}^{n} \lambda_{k} \bh_{k}) = T\bh_{0}+\sum_{k=1}^{n} \lambda_{k} T\bh_{k} = T\bzero = \bzero$.

Now, we relate these transformed subgradients to the subgradients of $\tilde{f}$ and $\tilde{g}_{k}$ at $\boldsymbol{\eta}^* = T\btheta^*$.
From Condition 2, since $\bh_0 \in \dcirc f(\btheta^*)$, we have $T\bh_0 \in T \dcirc f(\btheta^*) = \dcirc \tilde{f}(T\btheta^*) = \dcirc \tilde{f}(\boldsymbol{\eta}^*)$. Let $\bh'_0 \coloneqq T\bh_0$.
From Condition 3, let $t_k^* = t_k(\btheta^*) > 0$ for each $k \in [n]$. Since $\bh_k \in \dcirc g_k(\btheta^*)$, we have $T\bh_k \in T \dcirc g_k(\btheta^*)$. Using Condition 3, $T \dcirc g_k(\btheta^*) = (1/t_k^*) \dcirc \tilde{g}_{k}(T\btheta^*)$. Therefore, $T\bh_k \in (1/t_k^*) \dcirc \tilde{g}_{k}(\boldsymbol{\eta}^*)$. This implies that $\bh'_k \coloneqq t_k^* T\bh_k$ is an element of $\dcirc \tilde{g}_{k}(\boldsymbol{\eta}^*)$.

Define new candidate dual variables for ($\tilde{P}$) as $\mu_n \coloneqq \lambda_k / t_k^*$ for $k \in [n]$. Since $\lambda_k \geq 0$ and $t_k^* > 0$, we have $\mu_k \geq 0$ (Dual Feasibility for $\tilde{P}$ holds).
Substitute $T\bh_k = (1/t_k^*) \bh'_k$ into the transformed stationarity equation:
\[ T\bh_{0}+\sum_{k=1}^{n} \lambda_{k} \left( \frac{1}{t_k^*} \bh'_{k} \right) = \bzero \implies \bh'_{0}+\sum_{k=1}^{n} \left(\frac{\lambda_k}{t_k^*}\right) \bh'_{k} = \bzero \implies \bh'_{0}+\sum_{k=1}^{n} \mu_k \bh'_{k} = \bzero. \]
Since $\bh'_0 \in \dcirc \tilde{f}(\boldsymbol{\eta}^*)$ and $\bh'_k \in \dcirc \tilde{g}_{k}(\boldsymbol{\eta}^*)$ for each $n$, this demonstrates that $\bzero \in \dcirc \tilde{f}(\boldsymbol{\eta}^*) + \sum_{k=1}^{n} \mu_k \dcirc \tilde{g}_{k}(\boldsymbol{\eta}^*)$. The stationarity condition holds for $\boldsymbol{\eta}^*$ with multipliers $\mu_k$.

\textit{3. Complementary Slackness (for $\tilde{P}$):} We need to verify $\mu_k \tilde{g}_{k}(\boldsymbol{\eta}^*) = 0$ for all $k \in [n]$.
Case 1: If $\lambda_k = 0$, then $\mu_k = \lambda_k / t_k^* = 0$, so $\mu_k \tilde{g}_{k}(\boldsymbol{\eta}^*) = 0$.
Case 2: If $\lambda_k > 0$, then by complementary slackness for (P), we must have $g_k(\btheta^*) = 0$. By Condition 1, $\tilde{g}_{k}(\boldsymbol{\eta}^*) = \tilde{g}_{k}(T\btheta^*) = g_k(\btheta^*) = 0$. Thus, $\mu_k \tilde{g}_{k}(\boldsymbol{\eta}^*) = \mu_k \cdot 0 = 0$.
In both cases, complementary slackness holds for ($\tilde{P}$).

Since $\boldsymbol{\eta}^*$ is feasible for ($\tilde{P}$) and satisfies stationarity, dual feasibility, and complementary slackness with multipliers $\mu_k$, it is a KKT point of ($\tilde{P}$).
\qed

\subsection{Proof of proposition~\ref{prop:composition_kkt_preserving}}
\label{app:proof_composition_kkt_preserving}
Let $P_{\Phi}$, $P_{\Phi_1}$, $P_{\Phi_2}$ be the minimum-norm max-margin problems, and let the linear transformations $T_1: \mathbb{R}^m \to \mathbb{R}^{m_1}$ and $T_2: \mathbb{R}^{m_1} \to \mathbb{R}^{m_2}$ be as defined in Proposition~\ref{prop:composition_kkt_preserving}. We assume $T_1$ is KKT Point Preserving from $P_{\Phi}$ to $P_{\Phi_1}$, and $T_2$ is KKT Point Preserving from $P_{\Phi_1}$ to $P_{\Phi_2}$ (as per Definition~\ref{def:kkt_point_preserving_transform}). Let $T = T_2 \circ T_1$. Our goal is to show $T$ is KKT Point Preserving from $P_{\Phi}$ to $P_{\Phi_2}$.

Consider an arbitrary dataset $\cD$ and an arbitrary KKT point $\btheta^* \in \mathbb{R}^m$ of $P_{\Phi}$.
Since $T_1$ is KKT Point Preserving, $T_1(\btheta^*)$ is a KKT point of $P_{\Phi_1}$.
Subsequently, since $T_2$ is KKT Point Preserving and $T_1(\btheta^*)$ is a KKT point of $P_{\Phi_1}$, $T_2(T_1(\btheta^*))$ is a KKT point of $P_{\Phi_2}$.
As $T_2(T_1(\btheta^*)) = (T_2 \circ T_1)(\btheta^*) = T(\btheta^*)$, it follows that $T(\btheta^*)$ is a KKT point of $P_{\Phi_2}$.
This fulfills the condition in Definition~\ref{def:kkt_point_preserving_transform} for $T$ to be KKT Point Preserving from $P_{\Phi}$ to $P_{\Phi_2}$.

\paragraph{Isometry Property:} 
If $T_1$ and $T_2$ are linear isometries, then $T = T_2 \circ T_1$ is also a linear isometry. This follows directly: for any $\btheta \in \mathbb{R}^m$,
\[ \norm{T(\btheta)}_2 = \norm{(T_2 \circ T_1)(\btheta)}_2 = \norm{T_2(T_1(\btheta))}_2 = \norm{T_1(\btheta)}_2 = \norm{\btheta}_2. \]
The third equality holds due to $T_2$ being an isometry, and the final equality due to $T_1$ being an isometry.
\qed
\subsection{Proof of theorem~\ref{thm:equivalence_kkt_embedding}}\label{app:proof_equivalence_kkt_embedding}
We apply Theorem \ref{thm:general_kkt_map} with $f(\btheta) = \frac{1}{2}\norm{\btheta}_2^2$, $g_k(\btheta) = 1 - y_k \Phi(\btheta; \bx_k)$ for problem (P) $\equiv P_\Phi$, and $\tilde{f}(\boldsymbol{\eta}) = \frac{1}{2}\norm{\boldsymbol{\eta}}_2^2$, $\tilde{g}_{k}(\boldsymbol{\eta}) = 1 - y_k \tilde{\Phi}(\boldsymbol{\eta}; \bx_k)$ for problem ($\tilde{P}$) $\equiv P_{\tilde{\Phi}}$.

\textit{Proof of (1) $\implies$ (2):} Assume condition (1) holds. We verify the premises of Theorem \ref{thm:general_kkt_map}.\\
1. Constraint preserving: $\tilde{g}_{k}(T\btheta) = 1 - y_k \tilde{\Phi}(T\btheta; \bx_k)$. By assumption $\tilde{\Phi}(T\btheta; \bx_k) = \Phi(\btheta; \bx_k)$, so $\tilde{g}_{k}(T\btheta) = 1 - y_k \Phi(\btheta; \bx_k) = g_k(\btheta)$. This holds.\\
2. Objective Subgradient preserving: $\dcirc f(\btheta) = \{\btheta\}$ and $\dcirc \tilde{f}(\boldsymbol{\eta}) = \{\boldsymbol{\eta}\}$. We require $\dcirc \tilde{f}(T\btheta) = T \dcirc f(\btheta)$, which translates to $\{T\btheta\} = T\{\btheta\} = \{T\btheta\}$. This holds trivially because $T$ is linear.\\
3. Constraint Subgradient preserving: We need $\dcirc \tilde{g}_{k}(T \btheta) = t_k(\btheta) T \dcirc g_k(\btheta)$ for some $t_k > 0$.
    Using the definition of $g_k$ and properties of subdifferentials (including Assumption \ref{def:chain_rule_assumption}): $\dcirc g_k(\btheta) = \dcirc_{\btheta} (1 - y_k \Phi(\btheta; \bx_k)) = -y_k \dcirc_{\btheta} \Phi(\btheta; \bx_k)$.
    Similarly, $\dcirc \tilde{g}_{k}(T \btheta) = \dcirc_{\boldsymbol{\eta}} (1 - y_k \tilde{\Phi}(\boldsymbol{\eta}; \bx_k)) |_{\boldsymbol{\eta}=T\btheta} = -y_k \dcirc_{\boldsymbol{\eta}} \tilde{\Phi}(T\btheta; \bx_k)$.
    From assumption (1), we have $\dcirc_{\boldsymbol{\eta}} \tilde{\Phi}(T \btheta; \bx_k) = \tau(\btheta, \bx_k) T(\dcirc_{\btheta} \Phi(\btheta; \bx_k))$.
    Substituting this into the expression for $\dcirc \tilde{g}_{k}(T\btheta)$:
    $\dcirc \tilde{g}_{k}(T \btheta) = -y_k \left[ \tau(\btheta, \bx_k) T(\dcirc_{\btheta} \Phi(\btheta; \bx_k)) \right]$
    Since $T$ is linear and $\tau > 0$, $-y_k$ can be moved inside the transformation $T$: 
    $= \tau(\btheta, \bx_k) T \left[ -y_k \dcirc_{\btheta} \Phi(\btheta; \bx_k) \right]$ 
    $= \tau(\btheta, \bx_k) T \dcirc g_k(\btheta)$.
    Thus, Condition 3 of Theorem \ref{thm:general_kkt_map} holds with $t_k(\btheta) = \tau(\btheta, \bx_k) > 0$.\\
Since all conditions of Theorem \ref{thm:general_kkt_map} are satisfied, (2) follows: if $\btheta^*$ is a KKT point for $P_\Phi$, then $T\btheta^*$ is a KKT point for $P_{\tilde{\Phi}}$.

\textit{Proof of (2) $\implies$ (1):} Assume (2) holds: KKT points are preserved for any dataset $\cD$.\\
1.Output Preservation: The preservation of KKT points implies the preservation of primal feasibility and complementary slackness. For any active constraint $n$ at a KKT point $\btheta^*$ (i.e., $g_k(\btheta^*) = 0$), we must have $\tilde{g}_n(T\btheta^*) = 0$ for $T\btheta^*$ to be a KKT point of $P_{\tilde{\Phi}}$. This means $1 - y_k \Phi(\btheta^*; \bx_k) = 0$ implies $1 - y_k \tilde{\Phi}(T\btheta^*; \bx_k) = 0$. This requires $\Phi(\btheta^*; \bx_k) = \tilde{\Phi}(T\btheta^*; \bx_k)$ whenever constraint $n$ is active at a KKT point. Assuming KKT points are sufficiently distributed, this suggests the general identity $\Phi(\btheta; \bx) = \tilde{\Phi}(T\btheta; \bx)$.\\
2. Subgradient Proportionality: Comparing the stationarity conditions $\bzero \in \{\btheta^*\} + \sum_{k=1}^{n} \lambda_k (-y_k \dcirc_{\btheta} \Phi(\btheta^*; \bx_k))$ and $\bzero \in \{T\btheta^*\} + \sum_{k=1}^{n} \mu_n (-y_k \dcirc_{\boldsymbol{\eta}} \tilde{\Phi}(T\btheta^*; \bx_k))$ for corresponding KKT points $\btheta^*$ and $T\btheta^*$ leads to $\sum_{k=1}^{n} \mu_n y_k \bh'_n = T(\sum_{k=1}^{n} \lambda_k y_k \bh_n)$, where $\bh'_n \in \dcirc_{\boldsymbol{\eta}} \tilde{\Phi}$ and $\bh_n \in \dcirc_{\btheta} \Phi$. For this equality and the relationship between multipliers ($\mu_n = \lambda_k / \tau_n$) to hold universally across datasets and active sets, it necessitates a structural relationship between the subdifferential sets themselves, namely $\dcirc_{\boldsymbol{\eta}} \tilde{\Phi}(T\btheta^*; \bx_k) = \tau_n T(\dcirc_{\btheta} \Phi(\btheta^*; \bx_k))$ for some $\tau_n > 0$.

The final statement regarding $\tau=1$ and isometry for the specific neuron splitting transformations $T$ is proven directly in Theorems \ref{thm:two_layer_split} and \ref{thm:deep_split_properties_kkt_embedding}.
\qed

\subsection{Proof of theorem~\ref{thm:deep_split_properties_kkt_embedding} }
\label{app:proof_deep_split_kkt}

The theorem states that the neuron splitting transformation $T$ for deep networks (as defined in Definition~\ref{def:deep_neuron_splitting}, leading to Theorem~\ref{thm:deep_split_properties_kkt_embedding}) (1) is an isometry, (2) preserves the network function, and (3) maps subgradients accordingly, i.e., $\dcirc_{\boldsymbol{\eta}} \tilde{\Phi}(T\btheta; \bx) = T(\dcirc_{\btheta} \Phi(\btheta; \bx))$. We prove each claim.

\paragraph{(1) Isometry} 
The squared Euclidean norm of the parameters $\btheta = (\vecop(W^{(1)}), \dots, \vecop(W^{(\alpha+1)}))$ is $\norm{\btheta}_2^2 = \sum_{l=1}^{\alpha+1} \norm{W^{(l)}}_F^2 = \sum_{l=1}^{\alpha+1} \sum_{r,s} (W^{(l)}_{r,s})^2$. The transformation $T: \btheta \mapsto \boldsymbol{\eta}$ only modifies weights related to the split neuron $j$ in layer $k$. Specifically, it affects the $j$-th row of $W^{(k)}$ (denoted $W^{(k)}_{j,:}$) and the $j$-th column of $W^{(k+1)}$ (denoted $W^{(k+1)}_{:,j}$). All other weight matrix elements are unchanged.

The contribution of $W^{(k)}_{j,:}$ to $\norm{\btheta}_2^2$ is $\norm{W^{(k)}_{j,:}}_2^2$. Under $T$, this row is effectively replaced by $m$ rows in $W'^{(k)}$, where the $i$-th such row (corresponding to the $i$-th split of neuron $j$) has its weights scaled by $c_i$ compared to $W^{(k)}_{j,:}$ (i.e., $W'^{(k)}_{(j,i),s} = c_i W^{(k)}_{j,s}$). The total contribution of these $m$ new rows to $\norm{\boldsymbol{\eta}}_2^2$ is $\sum_{i=1}^m \sum_s (c_i W^{(k)}_{j,s})^2 = \sum_{i=1}^m c_i^2 \sum_s (W^{(k)}_{j,s})^2 = (\sum_{i=1}^m c_i^2) \norm{W^{(k)}_{j,:}}_2^2 = 1 \cdot \norm{W^{(k)}_{j,:}}_2^2$, since $\sum_{i=1}^m c_i^2 = 1$. Thus, the contribution from weights leading into the split neuron (or its parts) is preserved.

Similarly, the contribution of $W^{(k+1)}_{:,j}$ to $\norm{\btheta}_2^2$ is $\norm{W^{(k+1)}_{:,j}}_2^2$. Under $T$, this column is effectively replaced by $m$ columns in $W'^{(k+1)}$, where the $i$-th such column (weights from the $i$-th split of neuron $j$) has its weights scaled by $c_i$ (i.e., $W'^{(k+1)}_{r,(j,i)} = c_i W^{(k+1)}_{r,j}$). Their total contribution to $\norm{\boldsymbol{\eta}}_2^2$ is $\sum_{i=1}^m \sum_r (c_i W^{(k+1)}_{r,j})^2 = (\sum_{i=1}^m c_i^2) \norm{W^{(k+1)}_{:,j}}_2^2 = 1 \cdot \norm{W^{(k+1)}_{:,j}}_2^2$. This contribution is also preserved.

Since the norms of the modified parts are preserved and all other weights are identical, $\norm{\boldsymbol{\eta}}_2^2 = \norm{T\btheta}_2^2 = \norm{\btheta}_2^2$. Thus, $T$ is a linear isometry.

\paragraph{(2) Output preserving} 
We trace the forward signal propagation. Let $\mathbf{x}^{(l)}$ and $\mathbf{z}^{(l)}$ denote the activation and pre-activation vectors at layer $l$ for network $\Phi(\btheta; \cdot)$, and $\mathbf{x}'^{(l)}, \mathbf{z}'^{(l)}$ for $\tilde{\Phi}(T\btheta; \cdot)$. The relations are $\mathbf{z}^{(l)} = W^{(l)}\mathbf{x}^{(l-1)}$ and $\mathbf{x}^{(l)} = \sigma_l(\mathbf{z}^{(l)})$ (with $\mathbf{x}^{(0)} = \mathbf{x}_{input}$ and $\sigma_{\alpha+1}$ being the identity for the output layer).

For layers $l < k$: $W'^{(l)}=W^{(l)}$. Since $\mathbf{x}'^{(0)}=\mathbf{x}^{(0)}$, by induction, $\mathbf{z}'^{(l)} = \mathbf{z}^{(l)}$ and $\mathbf{x}'^{(l)} = \mathbf{x}^{(l)}$ for $l < k$.

At layer $k$: The input is $\mathbf{x}'^{(k-1)} = \mathbf{x}^{(k-1)}$. The pre-activation is $\mathbf{z}'^{(k)} = W'^{(k)} \mathbf{x}^{(k-1)}$.
For an unsplit neuron $p'$ in $\tilde{\Phi}$ (corresponding to neuron $p \neq j$ in $\Phi$), the $p'$-th row of $W'^{(k)}$ is $W^{(k)}_{p,:}$. So, $z'^{(k)}_{p'} = W^{(k)}_{p,:} \mathbf{x}^{(k-1)} = z^{(k)}_p$.
For the $i$-th new neuron $(j,i)$ in $\tilde{\Phi}$ (resulting from splitting neuron $j$ in $\Phi$), its corresponding row in $W'^{(k)}$ is $c_i W^{(k)}_{j,:}$. So, $z'^{(k)}_{(j,i)} = (c_i W^{(k)}_{j,:}) \mathbf{x}^{(k-1)} = c_i (W^{(k)}_{j,:} \mathbf{x}^{(k-1)}) = c_i z^{(k)}_j$.
The activation $\mathbf{x}'^{(k)} = \sigma_k(\mathbf{z}'^{(k)})$ is then:
For $p' \neq j$ (unsplit), $x'^{(k)}_{p'} = \sigma_k(z'^{(k)}_{p'}) = \sigma_k(z^{(k)}_p) = x^{(k)}_p$.
For split components $(j,i)$, $x'^{(k)}_{(j,i)} = \sigma_k(z'^{(k)}_{(j,i)}) = \sigma_k(c_i z^{(k)}_j)$. Since $c_i \ge 0$ and $\sigma_k$ is positive 1-homogeneous, this equals $c_i \sigma_k(z^{(k)}_j) = c_i x^{(k)}_j$.

At layer $k+1$: The input is $\mathbf{x}'^{(k)}$. The pre-activation is $\mathbf{z}'^{(k+1)} = W'^{(k+1)} \mathbf{x}'^{(k)}$. Consider the $p$-th component $z'^{(k+1)}_p$:
\begin{align*}
z'^{(k+1)}_p &= \sum_{q' \text{ unsplit}} W'^{(k+1)}_{p, q'} x'^{(k)}_{q'} + \sum_{i=1}^m W'^{(k+1)}_{p, (j,i)} x'^{(k)}_{(j,i)} \\
&= \sum_{q \neq j} W^{(k+1)}_{p, q} x^{(k)}_{q} + \sum_{i=1}^m (c_i W^{(k+1)}_{p, j}) (c_i x^{(k)}_j) \quad \text{(by definition of } T \text{ and results from layer } k\text{)}\\
&= \sum_{q \neq j} W^{(k+1)}_{p, q} x^{(k)}_{q} + \left(\sum_{i=1}^m c_i^2\right) W^{(k+1)}_{p, j} x^{(k)}_j \\
&= \sum_{q \neq j} W^{(k+1)}_{p, q} x^{(k)}_{q} + W^{(k+1)}_{p, j} x^{(k)}_j = \sum_{q} W^{(k+1)}_{p, q} x^{(k)}_{q} = z^{(k+1)}_p.
\end{align*}
Thus, $\mathbf{z}'^{(k+1)} = \mathbf{z}^{(k+1)}$, which implies $\mathbf{x}'^{(k+1)} = \mathbf{x}^{(k+1)}$.

For layers $l > k+1$: Since inputs $\mathbf{x}'^{(l-1)} = \mathbf{x}^{(l-1)}$ and weights $W'^{(l)}=W^{(l)}$ are identical, all subsequent activations and pre-activations $\mathbf{z}'^{(l)}, \mathbf{x}'^{(l)}$ will be identical to $\mathbf{z}^{(l)}, \mathbf{x}^{(l)}$.
Therefore, the final output is preserved: $\tilde{\Phi}(T\btheta; \bx) = \Phi(\btheta; \bx)$.

\paragraph{(3) Subgradient preserving} 
We aim to show that $\dcirc_{\boldsymbol{\eta}} \tilde{\Phi}(T\btheta; \bx) = T(\dcirc_{\btheta} \Phi(\btheta; \bx))$. This means that any element $\mathbf{g}' \in \dcirc_{\boldsymbol{\eta}} \tilde{\Phi}(T\btheta; \bx)$ can be written as $T(\mathbf{g})$ for some $\mathbf{g} \in \dcirc_{\btheta} \Phi(\btheta; \bx)$, and vice versa. We use backpropagation for Clarke subdifferentials (Assumption~\ref{def:chain_rule_assumption}).
Let $\boldsymbol{\delta}^{(l)}$ be an element from $\dcirc_{\mathbf{z}^{(l)}} \Phi$ (subgradient of final output $\Phi$ w.r.t. pre-activations $\mathbf{z}^{(l)}$), $\boldsymbol{e}^{(l)}$ from $\dcirc_{\mathbf{x}^{(l)}} \Phi$, and $G^{(l)}$ from $\dcirc_{W^{(l)}} \Phi$. Primed versions ($\boldsymbol{\delta}'^{(l)}, \boldsymbol{e}'^{(l)}, G'^{(l)}$) are for $\tilde{\Phi}$.
The backpropagation rules are:
$\boldsymbol{e}^{(l)} = (W^{(l+1)})^{\top} \boldsymbol{\delta}^{(l+1)}$ (for an element choice).
$\delta_s^{(l)} \in \dcirc \sigma_l(z_s^{(l)}) e_s^{(l)}$ for each component $s$.
$G^{(l)} = \boldsymbol{\delta}^{(l)} (\mathbf{x}^{(l-1)})^{\top}$ (outer product).

\textbf{Step 3.1: For layers $l \ge k+1$ (above the split output)}
Starting from the output layer $\alpha+1$: $\boldsymbol{\delta}'^{(\alpha+1)} = \boldsymbol{\delta}^{(\alpha+1)}$ (e.g., $\{1\}$ if taking subgradient of scalar $\Phi$ w.r.t. itself, or the initial error signal from a loss). Since $\mathbf{z}'^{(l)} = \mathbf{z}^{(l)}$ and $W'^{(l+1)} = W^{(l+1)}$ for $l \ge k+1$, by backward induction, $\boldsymbol{\delta}'^{(l)} = \boldsymbol{\delta}^{(l)}$ and $\boldsymbol{e}'^{(l)} = \boldsymbol{e}^{(l)}$ for all $l \ge k+1$.

\textbf{Step 3.2: For layer $k$ (the layer of the split neuron)}
The error w.r.t. activations $\mathbf{x}'^{(k)}$ is $\boldsymbol{e}'^{(k)} = (W'^{(k+1)})^{\top} \boldsymbol{\delta}'^{(k+1)}$. Since $\boldsymbol{\delta}'^{(k+1)} = \boldsymbol{\delta}^{(k+1)}$:
\begin{itemize}
    \item For an unsplit neuron $p'$ in $\tilde{\Phi}$ (corresponding to $p \neq j$ in $\Phi$): The $p'$-th row of $(W'^{(k+1)})^{\top}$ (i.e., $p'$-th column of $W'^{(k+1)}$) is $W^{(k+1)}_{:,p}$. So, $e'^{(k)}_{p'} = (W^{(k+1)}_{:,p})^{\top} \boldsymbol{\delta}^{(k+1)} = e^{(k)}_p$.
    \item For a split neuron component $(j,i)$ in $\tilde{\Phi}$: The $(j,i)$-th row of $(W'^{(k+1)})^{\top}$ (i.e., $(j,i)$-th column of $W'^{(k+1)}$) is $c_i W^{(k+1)}_{:,j}$. So, $e'^{(k)}_{(j,i)} = (c_i W^{(k+1)}_{:,j})^{\top} \boldsymbol{\delta}^{(k+1)} = c_i ((W^{(k+1)}_{:,j})^{\top} \boldsymbol{\delta}^{(k+1)}) = c_i e^{(k)}_j$.
\end{itemize}
Thus, $\boldsymbol{e}'^{(k)}$ has components $e^{(k)}_p$ for unsplit neurons and $c_i e^{(k)}_j$ for split neurons.
Now, for errors w.r.t. pre-activations $\mathbf{z}'^{(k)}$, where $\delta'^{(k)}_s \in \dcirc \sigma_k(z'^{(k)}_s) e'^{(k)}_s$:
\begin{itemize}
    \item For $p' \neq j$: $z'^{(k)}_{p'} = z^{(k)}_p$ and $e'^{(k)}_{p'} = e^{(k)}_p$. So, $\delta'^{(k)}_{p'} \in \dcirc \sigma_k(z^{(k)}_p) e^{(k)}_p$, meaning $\delta'^{(k)}_{p'} = \delta^{(k)}_p$. (Assuming a consistent choice of subgradient element from $\dcirc \sigma_k$).
    \item For $(j,i)$: $z'^{(k)}_{(j,i)} = c_i z^{(k)}_j$ and $e'^{(k)}_{(j,i)} = c_i e^{(k)}_j$. Since $\sigma_k$ is 1-homogeneous, $\dcirc \sigma_k$ is 0-homogeneous (i.e., $\dcirc \sigma_k(cz) = \dcirc \sigma_k(z)$ for $c>0$; this property extends to $c_i \ge 0$ appropriately for ReLU-like activations). Thus, $\dcirc \sigma_k(c_i z^{(k)}_j) = \dcirc \sigma_k(z^{(k)}_j)$.
    So, $\delta'^{(k)}_{(j,i)} \in \dcirc \sigma_k(z^{(k)}_j) (c_i e^{(k)}_j) = c_i (\dcirc \sigma_k(z^{(k)}_j) e^{(k)}_j)$. This implies $\delta'^{(k)}_{(j,i)} = c_i \delta^{(k)}_j$.
\end{itemize}
So, $\boldsymbol{\delta}'^{(k)}$ has components $\delta^{(k)}_p$ for unsplit neurons and $c_i \delta^{(k)}_j$ for split neurons.

\textbf{Step 3.3: For layers $l < k$ (below the split neuron)}
The error w.r.t. activations at layer $k-1$, $\boldsymbol{e}'^{(k-1)}$, is $(W'^{(k)})^{\top} \boldsymbol{\delta}'^{(k)}$. A component $s$ of $\boldsymbol{e}'^{(k-1)}$ is:
\begin{align*}
e'^{(k-1)}_s &= \sum_{p' \text{ unsplit}} (W'^{(k)})_{p',s} \delta'^{(k)}_{p'} + \sum_{i=1}^m (W'^{(k)})_{(j,i),s} \delta'^{(k)}_{(j,i)} \\
&= \sum_{p \neq j} W^{(k)}_{p,s} \delta^{(k)}_p + \sum_{i=1}^m (c_i W^{(k)}_{j,s}) (c_i \delta^{(k)}_j) \quad \text{(using definitions of } W'^{(k)} \text{ and results for } \boldsymbol{\delta}'^{(k)}\text{)}\\
&= \sum_{p \neq j} W^{(k)}_{p,s} \delta^{(k)}_p + \left(\sum_{i=1}^m c_i^2\right) W^{(k)}_{j,s} \delta^{(k)}_j \\
&= \sum_{p \neq j} W^{(k)}_{p,s} \delta^{(k)}_p + W^{(k)}_{j,s} \delta^{(k)}_j = \sum_{p} W^{(k)}_{p,s} \delta^{(k)}_p = e^{(k-1)}_s.
\end{align*}
Thus, $\boldsymbol{e}'^{(k-1)} = \boldsymbol{e}^{(k-1)}$. Since $W'^{(l)}=W^{(l)}$ for $l < k$, and $\mathbf{x}'^{(l-1)}=\mathbf{x}^{(l-1)}$ for $l \le k-1$, by backward induction, $\boldsymbol{\delta}'^{(l)} = \boldsymbol{\delta}^{(l)}$ and $\boldsymbol{e}'^{(l)} = \boldsymbol{e}^{(l)}$ for all $l < k$.

\textbf{Step 3.4: Parameter Subgradients $G'^{(l)}$ and $G^{(l)}$}
The subgradient $G^{(l)}$ is $\boldsymbol{\delta}^{(l)} (\mathbf{x}^{(l-1)})^{\top}$ and $G'^{(l)}$ is $\boldsymbol{\delta}'^{(l)} (\mathbf{x}'^{(l-1)})^{\top}$.
\begin{itemize}
    \item For $l \notin \{k, k+1\}$: Since $\boldsymbol{\delta}'^{(l)} = \boldsymbol{\delta}^{(l)}$ and $\mathbf{x}'^{(l-1)} = \mathbf{x}^{(l-1)}$, it follows that $G'^{(l)} = G^{(l)}$. This matches the action of $T$ on these unchanged weight matrices.
    \item For $l=k$ (weights $W^{(k)}$ into the split layer): $\mathbf{x}'^{(k-1)} = \mathbf{x}^{(k-1)}$.
        For rows $p' \neq j$ in $W'^{(k)}$ (unsplit neurons), $G'^{(k)}_{p',:} = \delta'^{(k)}_{p'} (\mathbf{x}^{(k-1)})^{\top} = \delta^{(k)}_p (\mathbf{x}^{(k-1)})^{\top} = G^{(k)}_{p,:}$.
        For rows corresponding to split neuron $(j,i)$ in $W'^{(k)}$, $G'^{(k)}_{(j,i),:} = \delta'^{(k)}_{(j,i)} (\mathbf{x}^{(k-1)})^{\top} = (c_i \delta^{(k)}_j) (\mathbf{x}^{(k-1)})^{\top} = c_i G^{(k)}_{j,:}$.
        This means the subgradient matrix $G'^{(k)}$ has rows $G^{(k)}_{p,:}$ for $p \neq j$, and $m$ blocks of rows $c_i G^{(k)}_{j,:}$ (where $G^{(k)}_{j,:}$ is the subgradient for original row $W^{(k)}_{j,:}$, corresponding to how $T$ transforms $G^{(k)}$.
    \item For $l=k+1$ (weights $W^{(k+1)}$ out of the split layer): $\boldsymbol{\delta}'^{(k+1)} = \boldsymbol{\delta}^{(k+1)}$.
        For columns $p' \neq j$ in $W'^{(k+1)}$ (unsplit neurons), $G'^{(k+1)}_{:,p'} = \boldsymbol{\delta}^{(k+1)} (x'^{(k)}_{p'})^{\top} = \boldsymbol{\delta}^{(k+1)} (x^{(k)}_p)^{\top} = G^{(k+1)}_{:,p}$.
        For columns corresponding to split neuron $(j,i)$ in $W'^{(k+1)}$, $G'^{(k+1)}_{:,(j,i)} = \boldsymbol{\delta}^{(k+1)} (x'^{(k)}_{(j,i)})^{\top} = \boldsymbol{\delta}^{(k+1)} (c_i x^{(k)}_j)^{\top} = c_i G^{(k+1)}_{:,j}$.
        This means $G'^{(k+1)}$ has columns $G^{(k+1)}_{:,p}$ for $p \neq j$, and $m$ blocks of columns $c_i G^{(k+1)}_{:,j}$, corresponding to how $T$ transforms $G^{(k+1)}$.
\end{itemize}

\textbf{Step 3.5: Conclusion on Subgradient Sets}
The above derivations show that for any choice of subgradient path (i.e., selection of elements from $\dcirc \sigma_l$ at each gate) in calculating an element $\mathbf{g} = \{G^{(l)}\} \in \dcirc_{\btheta} \Phi$, the corresponding path in $\tilde{\Phi}$ yields an element $\mathbf{g}' = \{G'^{(l)}\} \in \dcirc_{\boldsymbol{\eta}} \tilde{\Phi}$ such that its components $G'^{(l)}$ are precisely those obtained by applying the structural transformation $T$ to the components $G^{(l)}$ of $\mathbf{g}$. Specifically, $G'^{(l)} = G^{(l)}$ for $l \notin \{k, k+1\}$; $G'^{(k)}$ has its rows transformed as $T$ acts on rows of $W^{(k)}$ (scaled by $c_i$ for split parts); $G'^{(k+1)}$ has its columns transformed as $T$ acts on columns of $W^{(k+1)}$ (scaled by $c_i$ for split parts). This structural correspondence for arbitrary elements implies the equality of the entire sets: $\dcirc_{\boldsymbol{\eta}} \tilde{\Phi}(T\btheta; \bx) = T(\dcirc_{\btheta} \Phi(\btheta; \bx))$.
The operator $T(\cdot)$ on the set $\dcirc_{\btheta} \Phi(\btheta; \bx)$ is understood as applying the described transformation to each element (collection of subgradient matrices) in the set.
\qed

\subsection{Proof of Theorem~\ref{thm:cnn_kkt_embedding} }
\label{app:proof_cnn}

The theorem states that the channel splitting transformation $T$ for deep CNNs (as defined in Definition~\ref{def:channel_split}, leading to Theorem~\ref{thm:cnn_kkt_embedding}) (1) is an isometry, (2) preserves the network function, and (3) maps subgradients accordingly, i.e., $\dcirc_{\boldsymbol{\eta}} \tilde{\Phi}(T\btheta; \bx) = T(\dcirc_{\btheta} \Phi(\btheta; \bx))$. We prove each claim.

\paragraph{Notation for CNNs.}
We denote feature maps (tensors) with capital letters. Let $\mathbf{X}^{(l)}$ and $\mathbf{Z}^{(l)}$ be the activation and pre-activation feature maps at layer $l$. The $p$-th output channel of the activation map is $\mathbf{X}_p^{(l)}$. The forward pass is defined by $\mathbf{Z}_p^{(l)} = \sum_q W_{p,q}^{(l)} * \mathbf{X}_q^{(l-1)}$ and $\mathbf{X}_p^{(l)} = \sigma_l(\mathbf{Z}_p^{(l)})$, where $*$ denotes convolution. The parameters $\btheta$ are the vectorized collection of all filter tensors $\{W^{(l)}\}$. We use primed versions for the split network $\tilde{\Phi}$.

\paragraph{(1) Isometry}
The squared Euclidean norm of the parameters $\btheta = (\vecop(W^{(1)}), \dots, \vecop(W^{(\alpha+1)}))$ is $\norm{\btheta}_2^2 = \sum_{l=1}^{\alpha+1} \norm{W^{(l)}}_F^2$. The transformation $T$ only modifies filters related to the split output channel $j$ of layer $k$ and the corresponding input slices of filters at layer $k+1$.

The contribution of the filter $W_{j,:}^{(k)}$ (all filters producing output channel $j$) to $\norm{\btheta}_2^2$ is $\norm{W_{j,:}^{(k)}}_F^2$. Under $T$, this is replaced by $m_{split}$ new filters $c_i W_{j,:}^{(k)}$. The total contribution of these new filters to $\norm{\boldsymbol{\eta}}_2^2$ is $\sum_{i=1}^{m_{split}} \norm{c_i W_{j,:}^{(k)}}_F^2 = (\sum_{i=1}^{m_{split}} c_i^2) \norm{W_{j,:}^{(k)}}_F^2 = 1 \cdot \norm{W_{j,:}^{(k)}}_F^2$, since $\sum_{i=1}^{m_{split}} c_i^2 = 1$. This part of the norm is preserved.

Similarly, for any filter $W_{p,:}^{(k+1)}$ at layer $k+1$, its $j$-th input slice $W_{p,j}^{(k+1)}$ contributes $\norm{W_{p,j}^{(k+1)}}_F^2$ to the norm. Under $T$, this is replaced by $m_{split}$ new slices $c_i W_{p,j}^{(k+1)}$. Their total contribution to $\norm{\boldsymbol{\eta}}_2^2$ across all filters $p$ at layer $k+1$ is $\sum_p \sum_{i=1}^{m_{split}} \norm{c_i W_{p,j}^{(k+1)}}_F^2 = (\sum_{i=1}^{m_{split}} c_i^2) \sum_p \norm{W_{p,j}^{(k+1)}}_F^2 = 1 \cdot \sum_p \norm{W_{p,j}^{(k+1)}}_F^2$. This contribution is also preserved.

Since the norms of the modified parts are preserved and all other weights are identical, $\norm{\boldsymbol{\eta}}_2^2 = \norm{T\btheta}_2^2 = \norm{\btheta}_2^2$. Thus, $T$ is a linear isometry.

\paragraph{(2) Output preserving}
We trace the forward signal propagation. For layers $l < k$, weights and inputs are identical, thus $\mathbf{X}'^{(l)} = \mathbf{X}^{(l)}$ for $l < k$ by induction.

At layer $k$: The input is $\mathbf{X}'^{(k-1)} = \mathbf{X}^{(k-1)}$.
For an unsplit output channel $p \neq j$, $Z'^{(k)}_p = \sum_q W_{p,q}^{(k)} * X_q^{(k-1)} = Z^{(k)}_p$.
For the $i$-th new channel $(j,i)$, the filter is $c_i W_{j,:}^{(k)}$. So, $Z'^{(k)}_{(j,i)} = \sum_q (c_i W_{j,q}^{(k)}) * X_q^{(k-1)} = c_i Z^{(k)}_j$.
The activation $\mathbf{X}'^{(k)}$ is then: For $p \neq j$, $X'^{(k)}_p = \sigma_k(Z'^{(k)}_p) = X^{(k)}_p$. For split components $(j,i)$, $X'^{(k)}_{(j,i)} = \sigma_k(c_i Z^{(k)}_j) = c_i \sigma_k(Z^{(k)}_j) = c_i X^{(k)}_j$, since $\sigma_k$ is positive 1-homogeneous.

At layer $k+1$: The input is $\mathbf{X}'^{(k)}$. The pre-activation for any output channel $p$ is:
\begin{align*}
    Z'^{(k+1)}_p &= \sum_{q' \text{ unsplit}} W'^{(k+1)}_{p,q'} * X'^{(k)}_{q'} + \sum_{i=1}^{m_{split}} W'^{(k+1)}_{p,(j,i)} * X'^{(k)}_{(j,i)} \\
    &= \sum_{q \neq j} W^{(k+1)}_{p,q} * X^{(k)}_q + \sum_{i=1}^{m_{split}} (c_i W^{(k+1)}_{p,j}) * (c_i X^{(k)}_j) \quad \text{(by definition of } T\text{)}\\
    &= \sum_{q \neq j} W^{(k+1)}_{p,q} * X^{(k)}_q + \left(\sum_{i=1}^{m_{split}} c_i^2\right) (W^{(k+1)}_{p,j} * X^{(k)}_j) \\
    &= \sum_{q \neq j} W^{(k+1)}_{p,q} * X^{(k)}_q + W^{(k+1)}_{p,j} * X^{(k)}_j = Z^{(k+1)}_p.
\end{align*}
Thus, $\mathbf{Z}'^{(k+1)} = \mathbf{Z}^{(k+1)}$, which implies $\mathbf{X}'^{(k+1)} = \mathbf{X}^{(k+1)}$. For layers $l > k+1$, all subsequent activations are identical. Therefore, the final output is preserved: $\tilde{\Phi}(T\btheta; \bx) = \Phi(\btheta; \bx)$.

\paragraph{(3) Subgradient preserving}
We use backpropagation for Clarke subdifferentials. Let $\boldsymbol{\Delta}^{(l)} \in \dcirc_{\mathbf{Z}^{(l)}} \Phi$ and $\mathbf{E}^{(l)} \in \dcirc_{\mathbf{X}^{(l)}} \Phi$. Primed versions are for $\tilde{\Phi}$. The backpropagation rules involve convolutions with spatially-flipped filters.

\textbf{Step 3.1: For layers $l \ge k+1$}
Since the forward pass is identical for $l \ge k+1$, by backward induction, the subgradient error signals are also identical: $\boldsymbol{\Delta}'^{(l)} = \boldsymbol{\Delta}^{(l)}$ and $\mathbf{E}'^{(l)} = \mathbf{E}^{(l)}$ for all $l \ge k+1$.

\textbf{Step 3.2: For layer $k$}
The error w.r.t. activations $\mathbf{X}'^{(k)}$ is $\mathbf{E}'^{(k)}$, backpropagated from $\boldsymbol{\Delta}'^{(k+1)} = \boldsymbol{\Delta}^{(k+1)}$ through $W'^{(k+1)}$.
\begin{itemize}
    \item For an unsplit channel $p \neq j$, the error is sourced from unchanged filter slices, so $E'^{(k)}_p = E^{(k)}_p$.
    \item For a split channel $(j,i)$, the error is sourced from the scaled input slices $c_i W^{(k+1)}_{:,j}$. By linearity of the backprop operation, $E'^{(k)}_{(j,i)} = c_i E^{(k)}_j$.
\end{itemize}
The error w.r.t. pre-activations $\mathbf{Z}'^{(k)}$ is $\boldsymbol{\Delta}'^{(k)}$.
\begin{itemize}
    \item For $p \neq j$, $Z'^{(k)}_p = Z^{(k)}_p$ and $E'^{(k)}_p = E^{(k)}_p$, thus $\Delta'^{(k)}_p = \Delta^{(k)}_p$.
    \item For $(j,i)$, $Z'^{(k)}_{(j,i)} = c_i Z^{(k)}_j$ and $E'^{(k)}_{(j,i)} = c_i E^{(k)}_j$. Since $\dcirc\sigma_k$ is 0-homogeneous, $\dcirc\sigma_k(c_i Z^{(k)}_j) = \dcirc\sigma_k(Z^{(k)}_j)$. So, an element of $\dcirc_{Z'^{(k)}_{(j,i)}} \Phi$ is given by an element from $(c_i E^{(k)}_j) \circ \dcirc\sigma_k(Z^{(k)}_j)$, which implies $\Delta'^{(k)}_{(j,i)} = c_i \Delta^{(k)}_j$.
\end{itemize}

\textbf{Step 3.3: For layers $l < k$}
The error $\mathbf{E}'^{(k-1)}$ is backpropagated from $\boldsymbol{\Delta}'^{(k)}$ through $W'^{(k)}$. The error contribution from unsplit channels $p \neq j$ is preserved. The contribution from the split channels is a sum over the new filters $(c_i W_{j,:}^{(k)})$ and new errors $(c_i \boldsymbol{\Delta}_j^{(k)})$. The backprop operation results in a sum over $c_i^2$, which equals 1. Thus, the total error is preserved: $\mathbf{E}'^{(k-1)} = \mathbf{E}^{(k-1)}$. By backward induction, all errors for $l < k$ are identical.

\textbf{Step 3.4: Parameter Subgradients $G'^{(l)}$ and $G^{(l)}$}
The subgradient $G^{(l)}$ is computed from $\boldsymbol{\Delta}^{(l)}$ and $\mathbf{X}^{(l-1)}$.
\begin{itemize}
    \item For $l \notin \{k, k+1\}$, since errors and activations are identical, $G'^{(l)} = G^{(l)}$.
    \item For $l=k$: The subgradient for the new filter producing channel $(j,i)$ is computed from input $\mathbf{X}'^{(k-1)} = \mathbf{X}^{(k-1)}$ and error $\boldsymbol{\Delta}'^{(k)}_{(j,i)} = c_i \boldsymbol{\Delta}^{(k)}_j$. This yields $G'^{(k)}_{(j,i),:} = c_i G^{(k)}_{j,:}$, matching how $T$ transforms the filter subgradients.
    \item For $l=k+1$: The subgradient for the new input slice $(j,i)$ of any filter $W'^{(k+1)}_{p,:}$ is computed from input $\mathbf{X}'^{(k)}_{(j,i)} = c_i \mathbf{X}^{(k)}_j$ and error $\boldsymbol{\Delta}'^{(k+1)}_p = \boldsymbol{\Delta}^{(k+1)}_p$. This yields $G'^{(k+1)}_{p,(j,i)} = c_i G^{(k+1)}_{p,j}$, matching how $T$ transforms the subgradients of the filter input slices.
\end{itemize}

\textbf{Step 3.5: Conclusion on Subgradient Sets}
The derivations show that for any choice of subgradient path, the resulting subgradient tensor collection $\{G'^{(l)}\}$ is precisely the transformation $T$ applied to the original subgradient collection $\{G^{(l)}\}$. This structural correspondence implies the equality of the entire sets: $\dcirc_{\boldsymbol{\eta}} \tilde{\Phi}(T\btheta; \bx) = T(\dcirc_{\btheta} \Phi(\btheta; \bx))$.
\qed

\section{Proofs from section~\ref{sec:dynamic_analysis}} 
\label{app:proofs_dynamic} 

\subsection{Proof of theorem~\ref{thm:gf_trajectory_map_split}}
\label{app:proof_gf_map}
We aim to show that if $\btheta(t)$ solves $\deriv{\btheta}{t} \in -\dcirc \cL(\btheta)$, then $\boldsymbol{\eta}(t) = T\btheta(t)$ solves $\deriv{\boldsymbol{\eta}}{t} \in -\dcirc \tilde{\cL}(\boldsymbol{\eta}(t))$.
We have $\deriv{\boldsymbol{\eta}}{t} = T\deriv{\btheta}{t}$. We need to show $T(-\dcirc \cL(\btheta(t))) = -\dcirc \tilde{\cL}(T\btheta(t))$, which is equivalent to $T(\dcirc \cL(\btheta)) = \dcirc \tilde{\cL}(T\btheta)$.

Using the chain rule (valid under Assumption~\ref{as:dynamic}(A3) as $\ell$ is $\mathcal{C}^1$-smooth) and properties of $T$ (network equivalence $\tilde{\Phi}(T\btheta;\bx) = \Phi(\btheta;\bx)$ and subgradient equality $\dcirc_{\boldsymbol{\eta}} \tilde{\Phi}(T\btheta; \bx) = T(\dcirc_{\btheta} \Phi(\btheta; \bx))$ from Theorem~\ref{thm:deep_split_properties_kkt_embedding} (which relies on Assumptions~\ref{as:static}(A1, A2)) or Theorem~\ref{thm:two_layer_split} as appropriate):
\begin{align*}
\dcirc \tilde{\cL}(T\btheta) &= \sum_{k=1}^{n} \ell'(y_k \tilde{\Phi}(T\btheta; \bx_k)) y_k \dcirc_{\boldsymbol{\eta}} \tilde{\Phi}(T\btheta; \bx_k) \\
&= \sum_{k=1}^{n} \ell'(y_k \Phi(\btheta; \bx_k)) y_k [T(\dcirc_{\btheta} \Phi(\btheta; \bx_k))] \quad \text{(Using Thm.~\ref{thm:deep_split_properties_kkt_embedding} or \ref{thm:two_layer_split} properties)}\\
&= T \left[ \sum_{k=1}^{n} \ell'(y_k \Phi(\btheta; \bx_k)) y_k \dcirc_{\btheta} \Phi(\btheta; \bx_k) \right] \quad \text{(by linearity of } T \text{ and sum rule)}\\
&= T (\dcirc \cL(\btheta)).
\end{align*}
Thus $T(-\dcirc \cL(\btheta)) = -\dcirc \tilde{\cL}(T\btheta)$.
If $\deriv{\btheta}{t} \in -\dcirc \cL(\btheta(t))$, then $\deriv{\boldsymbol{\eta}}{t} = T\deriv{\btheta}{t} \in T(-\dcirc \cL(\btheta(t))) = -\dcirc \tilde{\cL}(T\btheta(t)) = -\dcirc \tilde{\cL}(\boldsymbol{\eta}(t))$.
With matching initial conditions $\boldsymbol{\eta}(0) = T\btheta(0)$, and assuming unique solutions exist for the gradient flow (as per Assumption~\ref{as:dynamic}(A4)), the trajectories coincide: $\boldsymbol{\eta}(t) = T\btheta(t)$ for all $t \ge 0$.
\qed

\subsection{Proof of theorem~\ref{thm:omega_limit_set_map}}
\label{app:proof_omega_limit_map}
We need to show two inclusions: $T(L(\btheta(0))) \subseteq L(\boldsymbol{\eta}(0))$ and $L(\boldsymbol{\eta}(0)) \subseteq T(L(\btheta(0)))$. This proof requires Assumptions~\ref{as:static}(A1, A2) and \ref{as:dynamic}(A3, A4).

1.  \textbf{Show $T(L(\btheta(0))) \subseteq L(\boldsymbol{\eta}(0))$:}
    Let $\bx \in L(\btheta(0))$. By Definition~\ref{def:omega_limit_set_normalized}, there exists a sequence $t_k \to \infty$ such that the normalized trajectory $\bar{\btheta}(t_k) = \btheta(t_k) / \norm{\btheta(t_k)}_2$ converges to $\bx$. Note that $\norm{\btheta(t_k)}_2 \to \infty$ by Assumption~\ref{as:dynamic}(A4), so normalization is well-defined for large $k$.
    From Theorem~\ref{thm:gf_trajectory_map_split}, we have $\boldsymbol{\eta}(t) = T\btheta(t)$. Since $T$ is an isometry (proven in Theorems~\ref{thm:two_layer_split}, \ref{thm:deep_split_properties_kkt_embedding}), $\norm{\boldsymbol{\eta}(t)}_2 = \norm{T\btheta(t)}_2 = \norm{\btheta(t)}_2$. 
    Therefore, the normalized trajectories are related by:
    \[ \bar{\boldsymbol{\eta}}(t) = \frac{\boldsymbol{\eta}(t)}{\norm{\boldsymbol{\eta}(t)}_2} = \frac{T\btheta(t)}{\norm{\btheta(t)}_2} = T\left(\frac{\btheta(t)}{\norm{\btheta(t)}_2}\right) = T(\bar{\btheta}(t)). \]
    Now consider the sequence $\bar{\boldsymbol{\eta}}(t_k) = T(\bar{\btheta}(t_k))$. Since $T$ is a linear transformation in finite-dimensional spaces, it is continuous. As $k \to \infty$, $\bar{\btheta}(t_k) \to \bx$ implies $T(\bar{\btheta}(t_k)) \to T(\bx)$.
    So, we have found a sequence $t_k \to \infty$ such that $\bar{\boldsymbol{\eta}}(t_k) \to T(\bx)$. By the definition of the $\omega$-limit set, this means $T(\bx) \in L(\boldsymbol{\eta}(0))$.
    Since this holds for any arbitrary $\bx \in L(\btheta(0))$, we conclude $T(L(\btheta(0))) \subseteq L(\boldsymbol{\eta}(0))$.

2.  \textbf{Show $L(\boldsymbol{\eta}(0)) \subseteq T(L(\btheta(0)))$:}
    Let $\by \in L(\boldsymbol{\eta}(0))$. By Definition~\ref{def:omega_limit_set_normalized}, there exists a sequence $t'_k \to \infty$ such that $\bar{\boldsymbol{\eta}}(t'_k) \to \by$.
    We know $\bar{\boldsymbol{\eta}}(t'_k) = T(\bar{\btheta}(t'_k))$.
    The sequence $\{\bar{\btheta}(t'_k)\}_{k=1}^\infty$ lies on the unit sphere $\mathbb{S}^{m-1}$ in $\RR^m$, which is a compact set. By the Bolzano-Weierstrass theorem, there must exist a convergent subsequence. Let $\{t'_{k_j}\}_{j=1}^\infty$ be the indices of such a subsequence, so that $\bar{\btheta}(t'_{k_j}) \to \bx'$ for some $\bx' \in \mathbb{S}^{m-1}$ as $j \to \infty$.
    Since $t'_{k_j} \to \infty$ as $j \to \infty$, the limit point $\bx'$ must belong to the $\omega$-limit set of the original trajectory $\bar{\btheta}(t)$, i.e., $\bx' \in L(\btheta(0))$.
    Now consider the corresponding subsequence for $\bar{\boldsymbol{\eta}}$: $\{\bar{\boldsymbol{\eta}}(t'_{k_j})\}_{j=1}^\infty$. Since $T$ is continuous, as $j \to \infty$, $\bar{\btheta}(t'_{k_j}) \to \bx'$ implies $T(\bar{\btheta}(t'_{k_j})) \to T(\bx')$.
    So, $\bar{\boldsymbol{\eta}}(t'_{k_j}) \to T(\bx')$.
    However, the original sequence $\bar{\boldsymbol{\eta}}(t'_k)$ converges to $\by$. Any subsequence of a convergent sequence must converge to the same limit. Therefore, we must have $\by = T(\bx')$.
    Since $\by$ was an arbitrary element of $L(\boldsymbol{\eta}(0))$, and we found an element $\bx' \in L(\btheta(0))$ such that $\by = T(\bx')$, this demonstrates that every element in $L(\boldsymbol{\eta}(0))$ is the image under $T$ of some element in $L(\btheta(0))$.
    Hence, $L(\boldsymbol{\eta}(0)) \subseteq T(L(\btheta(0)))$.

Combining both inclusions yields $T(L(\btheta(0))) = L(\boldsymbol{\eta}(0))$.\qed

\subsection{Proof of corollary~\ref{cor:unique_limit_direction_embedding}}
\label{app:proof_cor_unique_limit_direction_embedding}
If the normalized trajectory $\bar{\btheta}(t)$ converges to a unique limit $\overline{\btheta}^*$, then by Definition~\ref{def:omega_limit_set_normalized}, its $\omega$-limit set is $L(\btheta(0)) = \{\overline{\btheta}^*\}$.
Under the specified assumptions in Corollary~\ref{cor:unique_limit_direction_embedding} (which include Assumptions \ref{as:static}(A1, A2), Assumption \ref{as:dynamic}(A3), and trajectory properties (A4)), Theorem~\ref{thm:omega_limit_set_map} states that $T(L(\btheta(0))) = L(\boldsymbol{\eta}(0))$.
Substituting $L(\btheta(0))$, we have $L(\boldsymbol{\eta}(0)) = T(\{\overline{\btheta}^*\}) = \{T\overline{\btheta}^*\}$.
Since the $\omega$-limit set $L(\boldsymbol{\eta}(0))$ consists of the single point $T\overline{\btheta}^*$, this implies that the normalized trajectory $\bar{\boldsymbol{\eta}}(t)$ converges to $T\overline{\btheta}^*$ as $t \to \infty$.
\qed

\section{Experimental Details}
\label{app:exp_details}

To validate our theoretical results on trajectory preservation (Theorem~\ref{thm:gf_trajectory_map_split}), we conduct a series of experiments corresponding to the results summarized in Table~\ref{tab:full_toy_results} and Table~\ref{tab:mnist_results}. We use PyTorch for all implementations. The core methodology involves training a pair of networks—a narrow network $\Phi$ and its wider counterpart $\tilde{\Phi}$ constructed via our transformation $T$—and measuring the trajectory error $\norm{\boldsymbol{\eta}(t) - T\btheta(t)}_2$ at each step.

\subsection{Experiments on 2D Toy Datasets}

\paragraph{Dataset Generation.}
We use a two-dimensional toy dataset consisting of 100 samples from two classes ($\pm 1$). For the separable case (Exp. 1, 3, 4, 5), the data points are generated from two Gaussian distributions centered at $(2, -2)$ and $(-2, 2)$, ensuring they are separable by a line passing through the origin. For the non-separable case (Exp. 2), the distributions are centered closer to the origin at $(1, -1)$ and $(-1, 1)$ to create overlap.

\paragraph{Model Architectures.}
We test both fully-connected (MLP) and convolutional (CNN) networks. All models are homogeneous, using Leaky ReLU with a negative slope of $\alpha=0.01$ as the activation function. Weights are initialized using Kaiming Normal/Uniform initialization as detailed in the code.
\begin{itemize}
    \item \textbf{MLP} (Exp. 1-4): The narrow network is a 2-layer MLP with a single hidden neuron (Input(2) $\to$ Hidden(1) $\to$ Output(1)). The wide network splits this to two hidden neurons (Input(2) $\to$ Hidden(2) $\to$ Output(1)).
    \item \textbf{CNN} (Exp. 5): The narrow network takes a $1 \times 5 \times 5$ input and consists of a convolutional layer with 2 output channels (3x3 kernel, stride 1, no padding), followed by an average pooling layer (2x2 kernel, stride 2), a flatten operation, and a final linear layer to produce a single output. The wide network splits the convolutional layer to 4 channels, adjusting the subsequent linear layer input dimension accordingly.
\end{itemize}

\paragraph{Training Setup.}
Networks are trained for $100,000$ steps using an exponential loss function. The learning rate is $0.1$ for MLP experiments and $0.001$ for the CNN experiment. For Gradient Descent (GD, Exp. 1, 2, 5), the full batch is used in each step. For Stochastic Gradient Descent (SGD, Exp. 3, 4), we use a batch size of $16$. Crucially, for the identical batch experiment (Exp. 3), both networks are fed the exact same sequence of mini-batches, whereas for the different batch experiment (Exp. 4), they use independent data loaders. The initial parameters of the wide network are always set to $\boldsymbol{\eta}(0) = T\btheta(0)$ using splitting coefficients $c_i = 1/\sqrt{m_{split}}$.

\paragraph{Results.}
The numerical results for maximum trajectory error are summarized in Table~\ref{tab:full_toy_results}. Figures~\ref{fig:exp1_exp2_comparison}, \ref{fig:exp3_exp4_comparison}, and \ref{fig:exp5_cnn_results} provide visual confirmation and further details. The error remains near machine precision ($\sim 10^{-13}$) in all cases where the theory predicts preservation (Exp. 1, 2, 3, 5), regardless of data separability or the use of SGD (with identical batches). In contrast, when the SGD batch order differs (Exp. 4), the error grows substantially, confirming the necessity of identical training paths. The slightly higher error floor for the CNN (Exp. 5, $\sim 10^{-10}$) is consistent with the expected accumulation of floating-point errors due to the higher computational complexity of convolutions.

\begin{table}[h!] 
\centering
\caption{Complete results for maximum trajectory error on 2D toy datasets.}
\label{tab:full_toy_results} 
\begin{tabular}{cllcc}
\toprule
\textbf{Exp.} & \textbf{Model} & \textbf{Optimizer} & \textbf{Data Condition} & \textbf{Max Trajectory Error} \\
\midrule
1 & MLP & GD & Separable & $4.46 \times 10^{-13}$ \\ 
2 & MLP & GD & Non-Separable & $5.57 \times 10^{-13}$ \\ 
3 & MLP & SGD (Identical batches) & Separable & $4.35 \times 10^{-13}$ \\ 
4 & MLP & SGD (Different batches) & Separable & $1.53 \times 10^{-1}$ \\  
5 & CNN & GD & Separable & $2.36 \times 10^{-10}$ \\ 
\bottomrule
\end{tabular}
\end{table}

\begin{figure}[h!]
    \centering
    \includegraphics[width=0.95\textwidth]{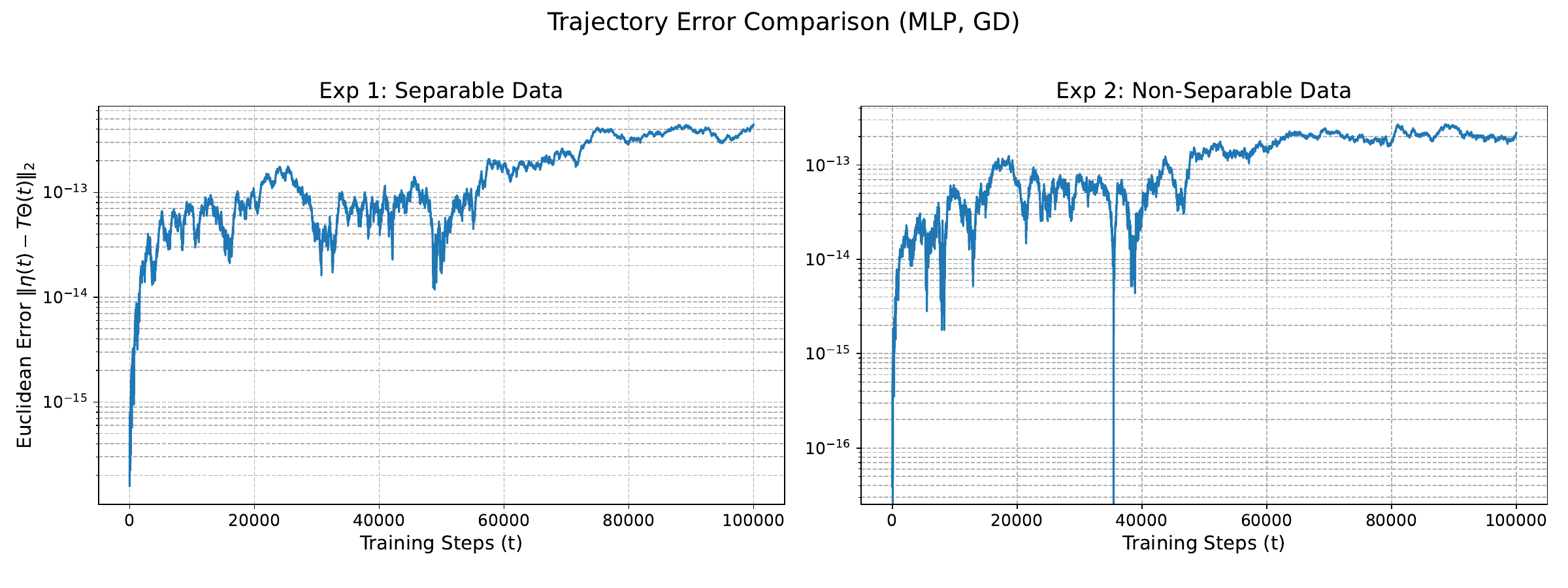}
    \caption{Trajectory error comparison for MLP experiments using Gradient Descent (GD). (Left) Exp. 1 on separable data shows error near machine precision. (Right) Exp. 2 on non-separable data also shows error remaining near machine precision, demonstrating robustness to data conditions.}
    \label{fig:exp1_exp2_comparison}
\end{figure}

\begin{figure}[h!]
    \centering
    \includegraphics[width=0.95\textwidth]{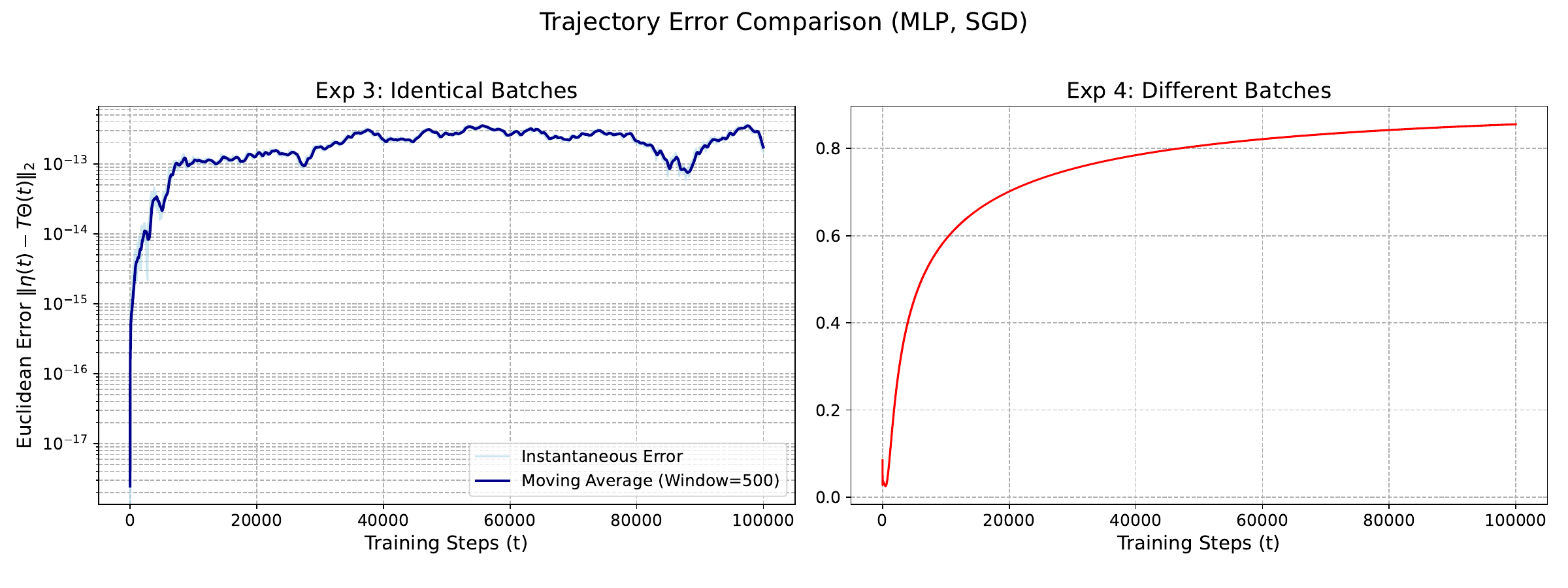}
    \caption{Trajectory error comparison for MLP experiments using Stochastic Gradient Descent (SGD). (Left) Exp. 3, using identical mini-batches for both networks, maintains error near machine precision (moving average shown). (Right) Exp. 4, using different mini-batches, shows substantial error growth, highlighting the necessity of identical data sequences.}
    \label{fig:exp3_exp4_comparison}
\end{figure}

\begin{figure}[h!]
    \centering
    \includegraphics[width=0.95\textwidth]{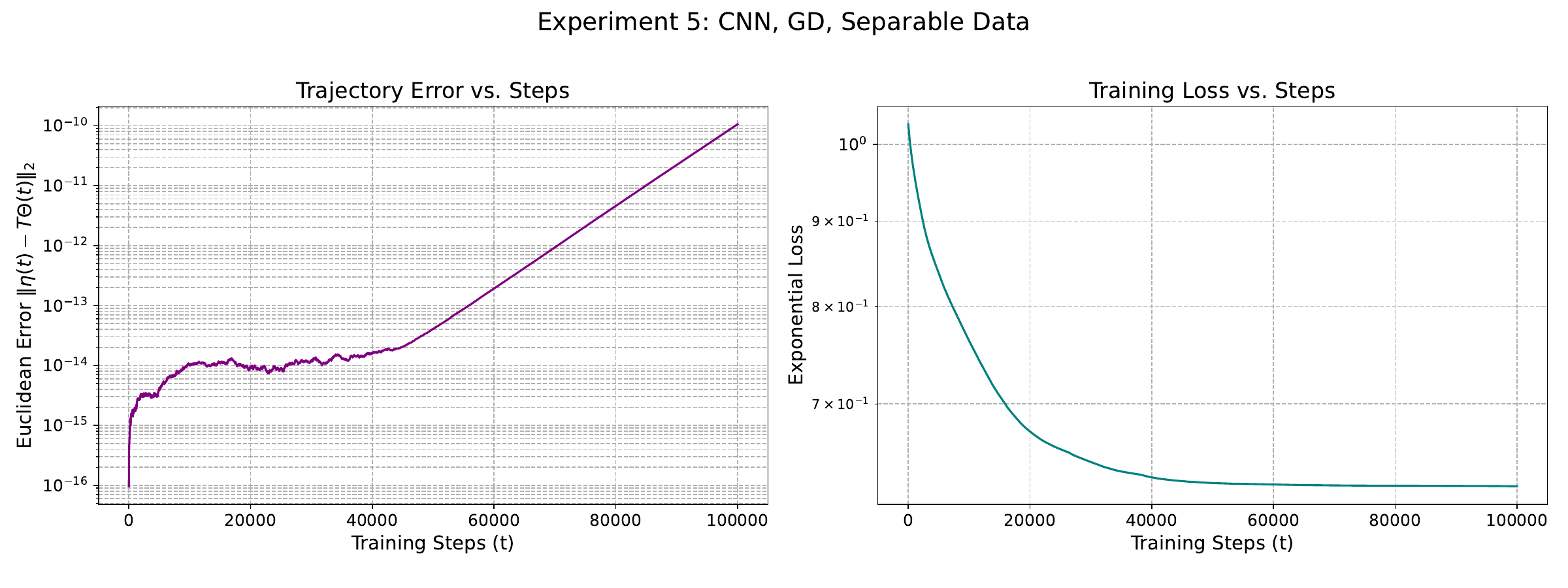}
    \caption{Results for the CNN experiment using GD on separable data (Exp. 5). (Left) The trajectory error remains small ($<10^{-9}$) but exhibits a slight upward trend, consistent with accumulated precision errors. (Right) The training loss converges successfully.}
    \label{fig:exp5_cnn_results}
\end{figure}

\subsection{Experiment on the MNIST Dataset}

\paragraph{Dataset and Models.}
To test our principle on a more realistic task, we use the MNIST dataset, focusing on the binary classification of digits '3' versus '5'. We use larger homogeneous models: an MLP splitting from 128 to 256 hidden units, and a CNN splitting from 10 to 20 channels (detailed architectures follow standard practices, similar to the toy CNN but scaled appropriately for MNIST image size).

\paragraph{Training Setup.}
Both models are trained for 1000 epochs using SGD with a learning rate of $0.001$ and a batch size of $64$. Data is normalized. Identical mini-batches are used for the narrow and wide networks at each step.

\paragraph{Results.}
The trajectory preservation principle continues to hold with remarkable precision on this practical task. The maximum trajectory errors, summarized in Table~\ref{tab:mnist_results}, remain close to machine precision for both architectures. Figures~\ref{fig:mnist_mlp_results} and~\ref{fig:mnist_cnn_results} provide a visual representation of the dynamics. The left panels show the trajectory error remaining consistently low throughout the 1000 epochs. The right panels display the corresponding training loss (evaluated on the full training set), confirming that both models learned effectively, achieving high final classification accuracies (MLP: 99.99\%, CNN: 99.82\%). These results demonstrate the principle's validity on a standard dataset with larger models.

\begin{table}[h!] 
\centering
\caption{Maximum trajectory error on the MNIST dataset.}
\label{tab:mnist_results} 
\begin{tabular}{lc}
\toprule
\textbf{Model Configuration} & \textbf{Max Trajectory Error} \\
\midrule
MLP (128 $\to$ 256 units) & $2.19 \times 10^{-13}$ \\
CNN (10 $\to$ 20 channels) & $1.28 \times 10^{-13}$ \\
\bottomrule
\end{tabular}
\end{table}
\newpage
\begin{figure}[h!]
    \centering
    \includegraphics[width=0.95\textwidth]{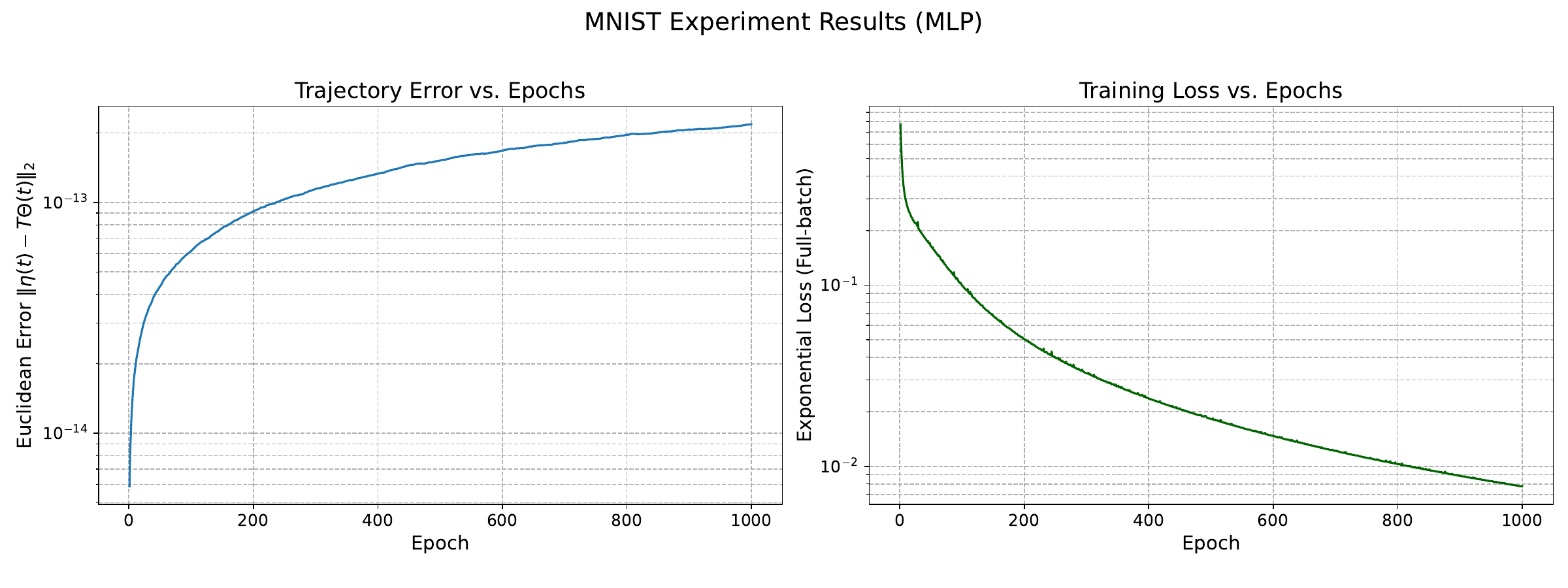} 
    \caption{
        MNIST results for MLP (128 $\to$ 256 units). 
        \textbf{(Left)} Trajectory error per epoch remains near machine precision. 
        \textbf{(Right)} Training loss (full-batch evaluation) converges successfully.
    }
    \label{fig:mnist_mlp_results} 
\end{figure}

\begin{figure}[h!]
    \centering
    \includegraphics[width=0.95\textwidth]{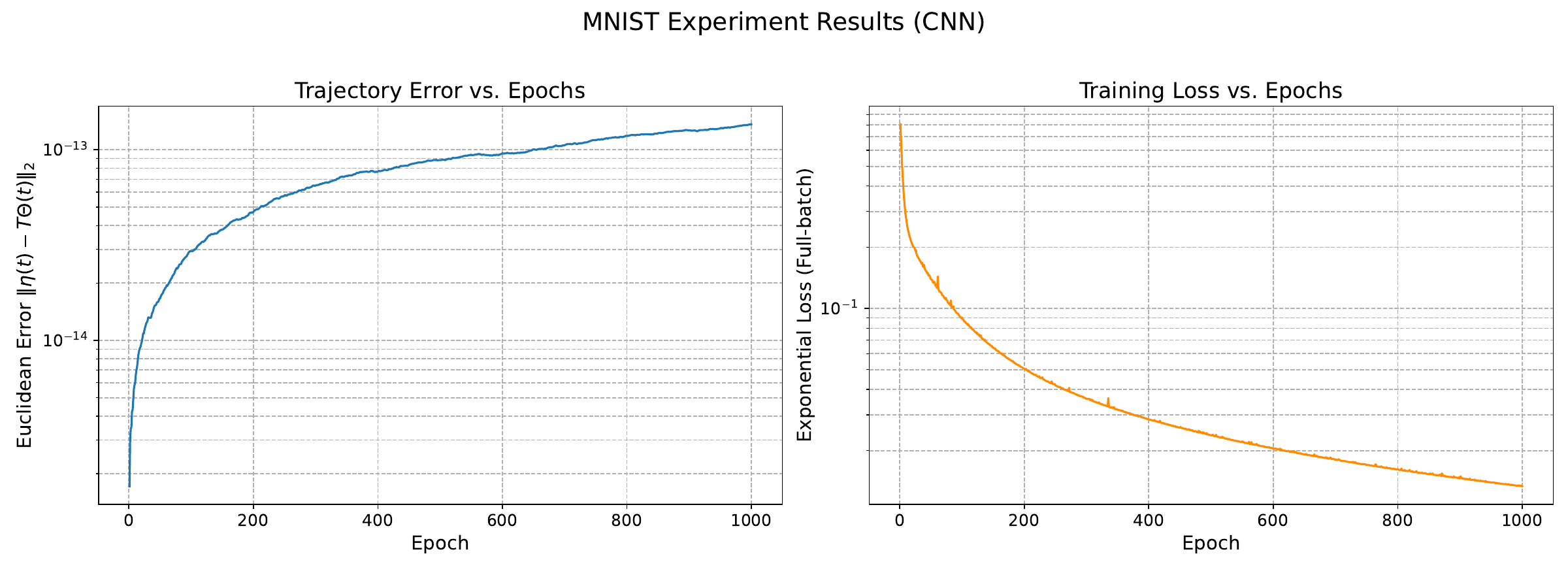} 
    \caption{
        MNIST results for CNN (10 $\to$ 20 channels). 
        \textbf{(Left)} Trajectory error per epoch remains near machine precision. 
        \textbf{(Right)} Training loss (full-batch evaluation) converges successfully.
    }
    \label{fig:mnist_cnn_results} 
\end{figure}

\end{document}